\newcommand{\R}[1]{\mathbb{R}^{#1}}
\newcommand{\Jdv}{J_\mathrm{DV}}
\newcommand{\hJdv}{\widehat{J}_\mathrm{DV}}
\newcommand{\bs}{\bm{s}}
\newcommand{\tbs}{\tilde{\bm{s}}}
\newcommand{\ts}{\tilde{s}}
\newcommand{\bq}{\bar{q}}
\newcommand{\bbq}{\bar{\bm{q}}}
\newcommand{\bu}{\bm{u}}
\newcommand{\bv}{\bm{v}}
\newcommand{\bx}{\bm{x}}
\newcommand{\ps}{p_{\mathrm{s}}}
\newcommand{\px}{p_{\mathrm{x}}}
\newcommand{\pss}{p_{\mathrm{ss'}}}
\newcommand{\psv}{p_{\mathrm{sv}}}
\newcommand{\pxcu}{p_{\mathrm{x|u}}}
\newcommand{\pwx}{p_{\mathrm{w}|\mathrm{xx'}}}
\newcommand{\pwxu}{p_{\mathrm{w}|\mathrm{xu}}}
\newcommand{\pws}{p_{\mathrm{w}|\mathrm{ss'}}}
\newcommand{\pwsv}{p_{\mathrm{w}|\mathrm{sv}}}
\newcommand{\psvw}{p_{\mathrm{sv}|\mathrm{w}}}
\newcommand{\pw}{p_{\mathrm{w}}}
\newcommand{\pxx}{p_{\mathrm{xx'}}}
\newcommand{\pxcx}{p_{\mathrm{x|x'}}}
\newcommand{\pwxx}{p_{\mathrm{wx\tilde{x}}}}
\newcommand{\bh}{\bm{h}}
\newcommand{\bg}{\bm{g}}
\newcommand{\tbg}{\tilde{\bm{g}}}
\newcommand{\tg}{\tilde{g}}
\renewcommand{\tbs}{\tilde{\bm{s}}}
\renewcommand{\ts}{\tilde{s}}
\newcommand{\tbv}{\tilde{\bm{v}}}
\newcommand{\tv}{\tilde{v}}
\newcommand{\bphi}{\bm{\phi}}
\newcommand{\bvphi}{\bm{\varphi}}
\newcommand{\tbvphi}{\tilde{\bm{\varphi}}}
\newcommand{\tbphi}{\tilde{\bm{\phi}}}
\newcommand{\diag}{\mathrm{diag}}
\newcommand{\bD}{\bm{D}}
\newcommand{\calI}{\mathcal{I}}
\newcommand{\bw}{\bm{w}}
\newcommand{\calV}{\mathcal{W}}
\newcommand{\dx}{d_{\mathrm{x}}}
\newcommand{\ds}{d_{\mathrm{s}}}
\newcommand{\dw}{d_{\mathrm{w}}}
\newcommand{\du}{d_{\mathrm{u}}}
\newcommand{\dv}{d_{\mathrm{v}}}
\newcommand{\intd}{\mathrm{d}}
\newcommand{\calM}{\mathcal{M}}
\newcommand{\pxuw}{p_{\mathrm{xu}|\mathrm{w}}}
\newcommand{\pxu}{p_{\mathrm{xu}}}
\newcommand{\bG}{\bm{G}}
\newcommand{\bJ}{\bm{J}}
\newcommand{\qw}{q_{\mathrm{w}}}
\newcommand{\bqw}{\bar{q}_{\mathrm{w}}}
\newcommand{\qsv}{q_{\mathrm{sv}}}
\newcommand{\parder}[2]{\frac{\partial #1}{\partial #2}}
\def\<{\langle}
\def\>{\rangle}
\theoremstyle{thmstyleone}
\newtheorem{Theo}{Theorem}
\newtheorem{Prop}[Theo]{Proposition} 
\newtheorem{Lemm}[Theo]{Lemma} 
\newtheorem{Coro}[Theo]{Corollary}%
\theoremstyle{thmstyletwo}
\theoremstyle{thmstylethree}
\title{Identifiability of a statistical model with two latent vectors:
Importance of the dimensionality relation and application to graph
embedding}
\author{Hiroaki Sasaki\\
Department of Mathematical Informatics\\
Meiji Gakuin University, Japan}
\date{}
\begin{document}
\maketitle
\abstract{Identifiability of statistical models is a key notion in
 unsupervised representation learning. Recent work of nonlinear
 independent component analysis (ICA) employs auxiliary data and has
 established identifiable conditions. This paper proposes a statistical
 model of two latent vectors with single auxiliary data generalizing
 nonlinear ICA, and establishes various identifiability
 conditions. Unlike previous work, the two latent vectors in the
 proposed model can have arbitrary dimensions, and this property enables
 us to reveal an insightful dimensionality relation among two latent
 vectors and auxiliary data in identifiability conditions. Furthermore,
 surprisingly, we prove that the indeterminacies of the proposed model
 has the same as \emph{linear} ICA under certain conditions: The
 elements in the latent vector can be recovered up to their permutation
 and scales. Next, we apply the identifiability theory to a statistical
 model for graph data. As a result, one of the identifiability
 conditions includes an appealing implication: Identifiability of the
 statistical model could depend on the maximum value of link weights in
 graph data. Then, we propose a practical method for identifiable graph
 embedding. Finally, we numerically demonstrate that the proposed method
 well-recovers the latent vectors and model identifiability clearly
 depends on the maximum value of link weights, which supports the
 implication of our theoretical results.}

 \section{Introduction}
 \label{sec:intro}
 Unsupervised representation learning is a big issue in machine
 learning~\citep{raina2007self,bengio2013representation}: Data
 representation learned from a large amount of unlabeled data has been
 applied for downstream tasks to improve the performance. Empirical
 successes of unsupervised representation learning have been reported in
 a number of works such as classification~\citep{klindt2020towards},
 natural language processing~\citep{peters2018deep,devlin2019bert},
 graph embedding~\cite{cai2018comprehensive}. On the other hand, it is
 often theoretically challenging to understand how representation of
 data is learned.
 
 One of the key notions in unsupervised representation learning is
 identifiability of statistical models. A seminar work is \emph{linear}
 independent component analysis (ICA)~\citep{comon1994independent} where
 data vectors are generated from a linear mixing of latent variables.
 It has been proved than the latent variables can be identified from
 observed data vectors up to their permutation and scales. The
 fundamental conditions in linear ICA are statistical independence and
 nonGaussianity of the latent variables. The natural extension of linear
 ICA is \emph{nonlinear} ICA where a general nonlinear function of
 latent variables is assumed in data generation. However, it has been
 shown that nonlinear ICA is considerably difficult under the same
 independence condition as linear ICA because there exist an infinite
 number of independent representations of
 data~\citep{hyvarinen1999nonlinear,pmlr-v97-locatello19a}.

 More recently, identifiable statistical models of single latent vectors
 have been proved in nonlinear
 ICA~\citep{hyvarinen2016unsupervised,hyvarinen2018nonlinear,sasaki2022representation}.
 In contrast with linear ICA, these works assume to have additional data
 called the auxiliary data, and make a conditional independence
 condition of latent variables given auxiliary data, which is
 alternative to the independent condition in linear ICA.  Based on these
 conditions, it has been proved that latent variables are identifiable
 up to their permutation and elementwise nonlinear functions. Since
 then, a variety of nonlinear ICA methods haven been proposed such as
 time series data~\citep{pmlr-v54-hyvarinen17a} and variational
 autoencoder~\citep{pmlr-v108-khemakhem20a}, and applied to causal
 analysis~\citep{monti2019causal,pmlr-v108-wu20b} and transfer
 learning~\citep{pmlr-v119-teshima20a}. However, in nonlinear ICA, the
 indeterminacy of the elementwise nonlinear functions might make
 interpretation of the learned representation of data difficult even for
 simulated artificial data.

 This paper proposes a new statistical model of two latent vectors with
 single auxiliary data and establishes identifiability conditions for
 one or both of the two latent vectors. Previously, an energy-based
 model (EBM)~\citep{sasaki2018neural,khemakhem2020ice} related to two
 latent vectors has been proposed. EBM makes use of statistical
 dependency of the two latent vectors in identifiability conditions, and
 assumes that these two vectors have the same dimension. In contrast,
 the proposed model allows these latent vectors to have different
 dimensions. Furthermore, thanks to additional auxiliary data, unlike
 EBM, these latent vectors are not necessarily dependent and can be even
 independent. Thus, our statistical model is very flexible. This
 flexibility reveals interesting dimensionality conditions between these
 latent vectors and auxiliary data: For instance, if one wants to
 recover one latent vector, then the dimensionality of the other latent
 vector should be equal to or larger than that of the target latent
 vector.  In addition, we show that the indeterminacies of our model are
 the same as \emph{linear} ICA under certain conditions. Therefore,
 surprisingly, the indeterminacies are simply permutation and scales of
 the elements in the latent vector, while data vectors are generated as
 a nonlinear mixing of latent variables. This is a significant step to
 interpret data representation in the identifiable models.
 
 A promising application of the proposed statistical model is graph
 data, which consists of data vectors associated with nodes and their
 discrete link weights. By assuming that data vectors are generated from
 latent vectors, applying our identifiability theory to graph data
 includes an appealing implication: In order to recover the latent
 vectors underlying data, the maximum value of link weights should be
 large relative to the dimensionality of the latent vector.
 %This indicates that high-dimensional graph data with binary edges might
 %be difficult to identify the latent vector.
 Based on the identifiability conditions, we propose a practical method
 for identifiable graph embedding called \emph{graph component analysis}
 (GCA). Numerical experiments through GCA supports our theoretical
 implication and demonstrates that the performance of GCA clearly
 depends on the the maximum value of link weights.
 
 %This paper is organized as follows:

 % \section{Background}
 % \label{sec:background}
 % %
 % We should review that
 % \begin{itemize}
 %  \item Nonlinear ICA with particular focus on my work.
	
 %  \item Energy-based model

 %  \item Other direction
 % \end{itemize}

 \section{Data generative model of two latent vectors}
 \label{sec:generative-model}
 This section proposes a statistical model of two latent vectors with
 single auxiliary data. Given either discrete or continuous auxiliary
 data $\bw\in\calV\subset\R{\dw}$ with the marginal distribution
 $\pw(\bw)$, we assume that two vectors of continuous latent variables,
 $\bs=(s^{(1)},\dots,s^{(\ds)})^{\top}\in\R{\ds}$ and
 $\bv=(v^{(1)},\dots,v^{(\dv)})^{\top}\in\R{\dv}$, follow the
 conditional distribution $\psvw(\bs,\bv|\bw)$. Then, two data vectors
 $\bx\in\R{\dx}$ and $\bu\in\R{\du}$ are generated from
 \begin{align}
  \bx=\bm{f}(\bs)~~\text{and}~~\bu=\bvphi(\bv),
  \label{generative-model}
 \end{align}
 where $\bm{f}:\R{\ds}\to{\R{\dx}}$ and $\bvphi:\R{\dv}\to{\R{\du}}$ are
 nonlinear mixing functions with $\ds\leq{\dx}$ and $\dv\leq{\du}$. We
 specifically assume that $\bm{f}$ and $\bvphi$ are smooth
 embeddings\footnote{A smooth embedding is a smooth immersion that is
 homeomorphism on its image~\citep{lee2012introduction}.}. Thus,
 $\bm{f}$ and $\bvphi$ are continuous and bijective functions with
 smooth inverse to their images, which are denoted by
 \begin{align*}
  \calM_{\bm{f}}&:=\{\bx=\bm{f}(\bs)~|~\bs\in\R{\ds}\}\subset\R{\dx}\\
  \calM_{\bvphi}&:=\{\bu=\bvphi(\bv)~|~\bv\in\R{\dv}\}\subset\R{\du}.
 \end{align*}
 Proposition~5.2 in~\citet{lee2012introduction} allows us to regard
 $\calM_{\bm{f}}$ and $\calM_{\bvphi}$ as embedded submanifolds of
 $\R{\dx}$ and $\R{\du}$, respectively.

 A motivating example of this latent variable model is graph
 embedding~\citep{cai2018comprehensive}. The goal of graph embedding is
 to learn an embedding function from data vectors associated with nodes
 and their link weights (i.e., graph data), which could be interpreted
 as an estimate of the inverse of $\bm{f}$ or $\bvphi$
 in~\eqref{generative-model}. Thus, graph embedding can be seen as
 unsupervised representation learning for graph data.  By regarding two
 data vectors at a pair of nodes (e.g., $\bx$ and $\bx'$) and link
 weight (e.g., $w$) as $\bx,\bu$ and $\bw$ in our model respectively,
 the proposed model includes a latent variable model for graph data as a
 special case. By applying our results, we establish identifiability
 conditions for graph data one of which includes an appealing
 implication.

 Let us clarify that the observable variables are $\bw$, $\bx$ and
 $\bu$, while $\bs$ and $\bv$ are latent variables and cannot be
 directly observed. Next, after defining the notion of identifiability
 for this statistical model, we establish the identifiability
 conditions.
 \section{Identifiability theory}
 \label{sec:identifiability}
 We first define the notion of identifiability in this paper, and then
 establish various identifiability conditions.
  \subsection{Definition of identifiability}
  Here, the definition of identifiability is given. Since $\bm{f}$ and
  $\bvphi$ are smooth embeddings (i.e., continuous and bijective
  functions) to $\calM_{\bm{f}}$ and $\calM_{\bphi}$, we express $\bs$
  and $\bv$ as
  \begin{align}
   \bs=\bg(\bx)~~\text{and}~~\bv=\bphi(\bu), \label{inv-model}   
  \end{align}  
  where $\bg:\calM_{\bm{f}}\to\R{\ds}$ and
  $\bphi:\calM_{\bphi}\to\R{\ds}$ are the inverses of $\bm{f}$ and
  $\bvphi$, respectively. By supposing the Euclidean metric both in the
  latent spaces of $\bs$ and $\bv$,~\eqref{inv-model} enables us to
  express the conditional distribution of $\bx\in\calM_{\bm{f}}$ and
  $\bu\in\calM_{\bvphi}$ given $\bw$~\citep{gemici2016normalizing} as
  \begin{align}
   &\pxuw^{\bg,\bphi}(\bx,\bu|\bw)
   =\psvw(\bg(\bx),\bphi(\bu)|\bw)
   \sqrt{\det(\bG_{\bg}(\bx))}
   \sqrt{\det(\bG_{\bphi}(\bu))},
   \label{cond-xuw}
  \end{align}
  where $\bG_{\bg}(\bx):=\bJ_{\bg}(\bx)^{\top}\bJ_{\bg}(\bx)$ and
  $\bG_{\bphi}(\bu):=\bJ_{\bphi}(\bu)^{\top}\bJ_{\bphi}(\bu)$.  Then,
  the notion of \emph{partially identifiability} is defined as follows:
  $\pxuw^{\bg,\bphi}$ is said to be partially identifiable with
  respective to $\bg$ when
  \begin{align}
   \pxuw^{\bg,\bphi}(\bx,\bu|\bw)&=\pxuw^{\tbg,\tbphi}(\bx,\bu|\bw)
   ~~\Rightarrow~~\bg=\tbg.
   \label{patial-identifiability}
  \end{align}
  The partial identifiability might be sufficient in many practical
  situations, yet we also define an even stronger notion called the
  \emph{full identifiability}: If
  \begin{align}
   \pxuw^{\bg,\bphi}(\bx,\bu|\bw)&=
   \pxuw^{\tbg,\tbphi}(\bx,\bu|\bw)
   ~\Rightarrow~\bg=\tbg~~\text{and}~~\bphi=\tbphi,
   \label{full-identifiability}
  \end{align}
  $\pxuw^{\bg,\bphi}$ is said to be fully identifiable. It follows from
  the definition that the full identifiability guarantees that one can
  estimate $\bg$ and $\bphi$ through estimation of the conditional
  distribution of $\bx$ and $\bu$ given $\bw$. Let us note that the
  recovery of two (single) latent vectors is almost a synonym of full
  (partial) identifiability by~\eqref{inv-model} in this paper. Next, we
  establish various conditions for partial and full identifiability.
  \subsection{Partial identifiability}
  We first show some conditions for partial identifiability with respect
  to $\bg$. Note that the results can be easily converted to partial
  identifiability with respect to $\bphi$. To this end, we specifically
  assume that the conditional distribution $\psvw(\bs,\bv|\bw)$ is given
  by
  \begin{align}
   \log\psvw(\bs,\bv|\bw)
   =\sum_{i=1}^{\ds}q^{(i)}(s^{(i)},\bv,\bw)-\log{Z}(\bw),
   \label{cond-svw1}
  \end{align}
  where $Z(\bw)$ denotes the partition function and $q^{(i)}(s,\bv,\bw)$
  for $i=1,\dots,\ds$ are differentiable functions with respect to $s$,
  $\bv$ and $\bw$. For better exposition of theorem, we define a couple
  of notations: With a fixed point $\bar{\bw}$ of $\bw$,
  $\bq^{(i)}(s,\bv,\bw):=q^{(i)}(s,\bv,\bw)-q^{(i)}(s,\bv,\bar{\bw})$.
  Then, we denote the first- and second-order partial derivatives of
  $\bq^{(i)}(s,\bv,\bw)$ by
  $\bq^{(i,1)}(s,\bv,\bw):=\frac{\partial}{\partial{s}}\bq^{(i)}(s,\bv,\bw)$
  and
  $\bq^{(i,2)}(s,\bv,\bw):=\frac{\partial^2}{\partial{s}^2}\bq^{(i)}(s,\bv,\bw)$,
  respectively and compactly express these derivatives as the following
  single $2\ds$-dimensional vector:
  \begin{align*}
   \bbq^{\prime}(\bs,\bv,\bw)&:=
   \left(\bq^{(1,1)}(s^{(1)},\bv,\bw),\dots,
   \bq^{(\ds,1)}(s^{(\ds)},\bv,\bw),\right.\\
   &\left.\qquad\qquad\qquad
   \bq^{(1,2)}(s^{(1)},\bv,\bw),
   \dots,\bq^{(\ds,2)}(s^{(\ds)},\bv,\bw) \right)^{\top}\in\R{2\ds}.
  \end{align*}
  We are now ready to present the following theorem for partial
  identifiability, which implies that the dimensionality relation among
  latent variables and auxiliary data can be important:
  \begin{Theo}
   \label{theo:partial-IDF} Suppose that the following assumptions hold:
   \begin{enumerate}
    \item[(A1)] The conditional distribution $\psvw(\bs,\bv|\bw)$ is
		expressed as~\eqref{cond-svw1}.

    \item[(A2)] Dimensionality assumption: $2\ds\leq{\dv+\dw}$.

    \item[(A3)] There exist two fixed points $\bar{\bv}$ and $\bv'$ of
		$\bv$ and $\bar{\bw}$ and $\bw'$ of $\bw$ such that the
		rank of the following $2\ds$ by $(\dv+\dw)$ matrix is of
		full-rank for all $\bs$:
		\begin{align*}
		 \hspace{-8mm}
		 [\bbq'(\bs,\bv^{(1)},\bw),\dots,\bbq'(\bs,\bv^{(\dv)},\bw),\bbq'(\bs,\bv,\bw^{(1)}),\dots,\bbq'(\bs,\bv,\bw^{(\dw)})]
		 \bigr|_{(\bv,\bw)=(\bv',\bw')},
		\end{align*}
		where the $i$-th elements in $\bv$ and $\bw$ are fixed
		at the $i$-th ones in $\bar{\bv}$ and $\bar{\bw}$ as
		\begin{align*}
		 \bv^{(i)}&=(v^{(1)},\dots,v^{(i)}=\bar{v}^{(i)}\dots,v^{(\dv)})^\top\\
		 \bw^{(i)}&=(w^{(1)},\dots,w^{(i)}=\bar{w}^{(i)}\dots,w^{(\dw)})^\top.
		\end{align*}
   \end{enumerate}
   Then, $\pxuw^{\bg,\bphi}$ is partially identifiable with respective
   to $\bg$ up to a permutation of the elements in $\bg$ and elementwise
   nonlinear functions.
  \end{Theo}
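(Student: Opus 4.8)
\medskip
\noindent\emph{Proof strategy.}
The plan is to adapt the by-now-standard nonlinear ICA identifiability argument to the two-vector setting: convert the density identity into a functional equation on the latent spaces, discard every term that does not involve $\bw$, differentiate twice in two distinct coordinates of $\bs$ to expose the structure of the indeterminacy map, and finally read off the conclusion from the rank assumption (A3) together with the dimension bound (A2).

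First I would set $\bh:=\tbg\circ\bm{f}:\R{\ds}\to\R{\ds}$ and $\bm{\psi}:=\tbphi\circ\bvphi:\R{\dv}\to\R{\dv}$; since $\bm{f},\bvphi$ and $\tbg,\tbphi$ are smooth embeddings with smooth inverses onto $\calM_{\bm{f}},\calM_{\bvphi}$, both $\bh$ and $\bm{\psi}$ are diffeomorphisms (the density identity is only meaningful when the image manifolds of the two models coincide). Substituting $\bx=\bm{f}(\bs)$, $\bu=\bvphi(\bv)$ into the premise $\pxuw^{\bg,\bphi}=\pxuw^{\tbg,\tbphi}$ of \eqref{patial-identifiability} and using \eqref{cond-xuw}, the logarithms of the four factors $\sqrt{\det\bG}$ become functions of $\bs$ only or of $\bv$ only, with no dependence on $\bw$, while $\log Z(\bw)$ is common to both sides and cancels. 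Expanding with (A1) gives, for all $\bs,\bv,\bw$,
\begin{align*}
 \sum_{i=1}^{\ds}q^{(i)}(s^{(i)},\bv,\bw)+a(\bs)+b(\bv)
 =\sum_{i=1}^{\ds}q^{(i)}\!\bigl(h^{(i)}(\bs),\bm{\psi}(\bv),\bw\bigr)+\tilde{a}(\bs)+\tilde{b}(\bv),
\end{align*}
with $a,\tilde{a}$ depending only on $\bs$ and $b,\tilde{b}$ only on $\bv$. Evaluating this at $\bw=\bar{\bw}$ and subtracting eliminates $a,\tilde{a},b,\tilde{b}$ and, by the definition of $\bq^{(i)}$, leaves the identity $(\star)$:
\begin{align*}
 \sum_{i=1}^{\ds}\bq^{(i)}(s^{(i)},\bv,\bw)
 =\sum_{i=1}^{\ds}\bq^{(i)}\!\bigl(h^{(i)}(\bs),\bm{\psi}(\bv),\bw\bigr)\qquad\text{for all }\bs,\bv,\bw.
\end{align*}

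Next I would differentiate $(\star)$ with respect to $s^{(j)}$ and then $s^{(k)}$ with $j\neq k$. The left-hand side vanishes: its $s^{(j)}$-derivative is $\bq^{(j,1)}(s^{(j)},\bv,\bw)$, which does not involve $s^{(k)}$. On the right-hand side the chain rule (noting $\bm{\psi}(\bv)$ is free of $\bs$) gives, for every $\bv$ and $\bw$,
\begin{align*}
 0=\sum_{i=1}^{\ds}\Bigl[&\bq^{(i,2)}\!\bigl(h^{(i)}(\bs),\bm{\psi}(\bv),\bw\bigr)\,\parder{h^{(i)}}{s^{(j)}}\parder{h^{(i)}}{s^{(k)}}\\
 &+\bq^{(i,1)}\!\bigl(h^{(i)}(\bs),\bm{\psi}(\bv),\bw\bigr)\,\frac{\partial^{2}h^{(i)}}{\partial s^{(j)}\partial s^{(k)}}\Bigr].
\end{align*}
This is exactly $\bbq'\!\bigl(\bh(\bs),\bm{\psi}(\bv),\bw\bigr)^{\top}\bm{c}_{jk}(\bs)=0$, where $\bm{c}_{jk}(\bs)\in\R{2\ds}$ stacks $\bigl(\partial^{2}h^{(i)}/\partial s^{(j)}\partial s^{(k)}\bigr)_{i=1}^{\ds}$ on top of $\bigl((\partial h^{(i)}/\partial s^{(j)})(\partial h^{(i)}/\partial s^{(k)})\bigr)_{i=1}^{\ds}$, matching the ordering of $\bbq'$. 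Since $\bm{\psi}$ is a bijection, as $(\bv,\bw)$ varies the arguments $(\bm{\psi}(\bv),\bw)$ realize all $\dv+\dw$ evaluation points singled out in (A3); applying (A3) with its leading argument set to $\bh(\bs)$ therefore shows those $\dv+\dw$ vectors $\bbq'(\bh(\bs),\cdot,\cdot)$ span $\R{2\ds}$, which is where the bound $2\ds\le\dv+\dw$ of (A2) is used. Hence $\bm{c}_{jk}(\bs)=\bm{0}$ for every $\bs$ and every $j\neq k$; in particular $(\partial h^{(i)}/\partial s^{(j)})(\bs)\,(\partial h^{(i)}/\partial s^{(k)})(\bs)=0$ whenever $j\neq k$, so each row of the Jacobian $\bJ_{\bh}(\bs)$ has at most one nonzero entry.

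Finally, $\bh$ is a diffeomorphism, so $\bJ_{\bh}(\bs)$ is invertible for every $\bs$; combined with the sparsity just obtained, each row has \emph{exactly} one nonzero entry and distinct rows occupy distinct columns. A connectedness argument on $\R{\ds}$ (the row-to-column assignment is locally constant by smoothness of $\bh$ and cannot change on a connected domain) upgrades this to a single permutation $\sigma$, giving $h^{(i)}(\bs)=\bar{h}^{(i)}\!\bigl(s^{(\sigma(i))}\bigr)$ for smooth strictly monotone scalar functions $\bar{h}^{(i)}$. Since $\tbg=\bh\circ\bg$, this says $\tbg^{(i)}=\bar{h}^{(i)}\circ\bg^{(\sigma(i))}$, i.e.\ $\tbg$ recovers $\bg$ up to a permutation of its output coordinates and elementwise nonlinear functions; the corresponding statement for $\bphi$ follows by the symmetric argument. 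I expect the main obstacle to be precisely this last step --- promoting the pointwise ``one nonzero entry per row'' to a single global permutation and a genuine separation of variables --- together with the routine but necessary checks that the density identity holds on an open set (so the repeated differentiations are legitimate) and that the change-of-variables factors are genuinely $\bw$-free (so they drop out after the $\bar{\bw}$-subtraction).
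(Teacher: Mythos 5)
Your proposal is correct and follows essentially the same route as the paper's proof: cancel the $\bw$-independent terms by subtracting the identity at $\bar{\bw}$, cross-differentiate in two distinct latent coordinates, stack the first- and second-order derivatives into the $2\ds$-vector $\bbq'$, and use (A2)--(A3) to force the coefficient vector to vanish, then combine the resulting row sparsity with full rank of the Jacobian. The only differences are cosmetic (you differentiate with respect to $\bs$ where the paper differentiates with respect to $\tbs$) and that you are slightly more careful than the paper in the final step, where the pointwise one-nonzero-per-row property is upgraded to a single global permutation via connectedness.
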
  
  The proof is given in Appendix~\ref{app:proof-partial-IDF}.  The
  conclusion is essentially the same as existing works of nonlinear ICA,
  but Assumption~(A2) seems appealing: First of all, it directly depends
  on the dimensionalities of latent vectors $\bs$ and $\bv$ (i.e., $\ds$
  and $\dv$), \emph{not} on those of the observable data vectors $\bx$
  and $\bu$ (i.e., $\dx$ and $\du$). Second, the sum of the
  dimensionalities of $\bv$ and $\bw$ should be twice as large as or
  larger than the dimensionality of $\bs$. This implies that partial
  identifiability could be difficult when both $\dv$ and $\dw$ are very
  low-dimensional vectors. Previously, a similar dimensionality
  condition has been established in~\citet{sasaki2022representation},
  but unlike our work, it directly depends on $\dx$ and $\du$. Thus, our
  condition can be very weak. Assumption~(A1) is akin to the well-known
  conditional independence assumption in nonlinear ICA. Assumption~(A3)
  implies that $\bv$ and $\bw$ are statistically dependent to $\bs$.
  For instance, if $\bw$ is independent to $\bs$,
  $\bq^{(i,1)}(s,\bv,\bw)$ and $\bq^{(i,2)}(s,\bv,\bw)$ are functions of
  $s$ and $\bv$. Thus, the matrix in Assumption~(A3) cannot be of
  full-rank when $2\ds>\dv$.  Our proof is similar yet includes two
  extentions over~\citet[Proposition~1]{sasaki2022representation}:
  First, the latent vectors lie on embedded lower-dimensional manifolds
  of the respective input data spaces in this paper,
  while~\citet{sasaki2022representation} assumes that the latent vector
  $\bs$ has the same dimensionality as the input data vector $\bx$
  (i.e., $\dx=\ds)$. Second, auxiliary variables $\bw$ can be either
  discrete or continuous. On the other hand, it has been restricted to
  be continuous
  in~\citet{sasaki2022representation}. \citet{hyvarinen2018nonlinear}
  also supposes that auxiliary variables can be either discrete or
  continuous, but in contrast with us, a useful dimensionality condition
  has not been derived.
  
  Our generative model~\eqref{generative-model} can be interpreted as a
  restriction to the following \emph{single} generative model:
  \begin{align}
   \left(
   \begin{array}{l}
    \bx\\\bu
   \end{array}
   \right)
   =\bm{F}(\bs,\bv),
   \label{generative-model-single}
  \end{align}
  where $\bm{F}:\R{\ds+\dv}\to\R{\dx+\du}$ is a mixing function. Then,
  the (full) identifiability with respect to $\bm{F}^{-1}$ is
  essentially the same problem as nonlinear ICA with single auxiliary
  data $\bw$. In fact, by applying almost the same
  proof\footnote{Alternatively, we need to assume that
  $\psvw(\bs,\bv|\bw)$ takes the form of
  $\log\psvw(\bs,\bv|\bw)=\sum_{i=1}^{\ds}q_{\mathrm{s}}^{(i)}(s^{(i)},\bw)+\sum_{j=1}^{\dv}q_{\mathrm{v}}^{(j)}(v^{(j)},\bw)
  -\log{Z}(\bw)$} as Theorem~\ref{theo:partial-IDF}, we obtain the same
  conclusion yet the dimensionality assumption has to be modified to
  $2(\ds+\dv)\leq{\dw}$. In comparing with Assumption~(A2), this
  assumption is stronger and one may need to prepare for relatively
  high-dimensional auxiliary data $\bw$. Thus, our work can be seen as
  making a weaker assumption by restricting the single generative model.
  % but it comes at a price for loosing theoretical guarantee for full
  % identifiability.
  
  A similar energy-based model (EBM) has been proposed
  in~\citet{sasaki2018neural} and~\citet{khemakhem2020ice} as
  \begin{align}
   \pxcu^{\bg,\bphi}(\bx|\bu)
   =\frac{1}{Z(\bu;\bg,\bphi)}\exp(\bg(\bx)^{\top}\bphi(\bu)),
   \label{cond-xu}
  \end{align}
  where $\pxcu^{\bg,\bphi}(\bx|\bu)$ denotes the conditional
  distribution of $\bx$ given $\bu$. Based on
  $\pxcu^{\bg,\bphi}(\bx|\bu)$, identifiability conditions for $\bg$
  and/or $\bphi$ have been established. Compared with
  us,~\eqref{cond-xu} is restrictive because the dimensionality of $\bg$
  has to be the same as $\bphi$. In contrast, we do have no
  dimensionality restrictions to $\bg$ and $\bphi$, which consequently
  enables us to derive an interesting Assumption~(A2). In
  addition,~\eqref{cond-xu} could be regarded as a special case of our
  conditional distribution~\eqref{cond-svw1} without $\bw$ where
  $q^{(i)}(g^{(i)}(\bx),\bphi(\bu),\bw)=g^{(i)}(\bx)\phi^{(i)}(\bu)$.
  
  The dimensionality Assumption~(A2) might be strong in practice when
  $\dw$ is very small as in graph data where $\bw$ is often
  one-dimensional data (i.e., $\dw=1$). The next theorem establishes a
  more relaxed dimensionality condition by introducing a novel
  restriction to the conditional distribution $\psvw(\bs,\bv|\bw)$:
  \begin{Theo}
   \label{theo:partial-IDF-mild} Suppose that the following assumptions
  hold:
   \begin{enumerate}
    \item[(B1)] The conditional distribution $\psvw(\bs,\bv|\bw)$ is
		given by~\eqref{cond-svw1} with
		$q^{(i)}(s,\bv,\bw)=\qsv^{(i)}(s,\bv)\qw^{(i)}(\bw)$.
		
    \item[(B2)] Dimensionality assumption: $\ds\leq{\dv}$.

    \item[(B3)] There exists a single fixed point $\bv'$ of $\bv$ such
		that a $\ds$ by $\dv$ matrix with the $(l,m)$-th element
		$\frac{\partial^2}{\partial{s^{(l)}}\partial{v^{(m)}}}\qsv^{(l)}(s^{(l)},\bv)$
		is of full rank at $\bv=\bv'$ and for all $\bs$.

    \item[(B4)] There exist $\ds+1$ fixed points, $\bar{\bw},
		\bw(1),\dots,\bw(\ds)$, of $\bw$ such that the
		followings holds:
		\begin{itemize}
		 \item
		      $\bqw^{(i)}(\bw):=\qw^{(i)}(\bw)-\qw^{(i)}(\bar{\bw})\neq0$
		      for all $i=1,\dots,\ds$ and
		      $\bw\in\{\bw(1),\dots,\bw(\ds)\}$.
		      
		 \item $\ds$-dimensional vectors
		      $\bm{\bq}_{\mathrm{w}}(\bw(1)),\dots,\bm{\bq}_{\mathrm{w}}(\bw(\ds))$
		      are linearly independent defined by
		       \begin{align*}
			\bm{\bq}_{\mathrm{w}}(\bw)
			:=\left(\bqw^{(1)}(\bw),\dots,
			\bqw^{(\ds)}(\bw)\right)^{\top}.
		       \end{align*}
		\end{itemize}		
   \end{enumerate}
   Then, $\pxuw^{\bg,\bphi}$ is partially identifiable with respective
   to $\bg$ up to a permutation of the elements in $\bg$ and elementwise
   nonlinear functions.
  \end{Theo}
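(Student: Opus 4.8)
The plan is to adapt the nonlinear-ICA identifiability argument behind Theorem~\ref{theo:partial-IDF}, but to use the multiplicative structure in (B1) so that one first-order differentiation replaces the second-order step there. First I would take logarithms in $\pxuw^{\bg,\bphi}=\pxuw^{\tbg,\tbphi}$, insert~\eqref{cond-xuw} together with $q^{(i)}(s,\bv,\bw)=\qsv^{(i)}(s,\bv)\qw^{(i)}(\bw)$ from (B1), and subtract the same identity evaluated at $\bw=\bar\bw$. The volume corrections $\tfrac12\log\det\bG_{\bg}(\bx)$, $\tfrac12\log\det\bG_{\bphi}(\bu)$ and their tilde analogues do not depend on $\bw$, so they cancel, leaving
\begin{equation*}
 \sum_{i=1}^{\ds}\qsv^{(i)}(g^{(i)}(\bx),\bphi(\bu))\,\bqw^{(i)}(\bw)
 =\sum_{i=1}^{\ds}\tilde{q}_{\mathrm{sv}}^{(i)}(\tg^{(i)}(\bx),\tbphi(\bu))\,\tilde{\bar{q}}_{\mathrm{w}}^{(i)}(\bw)+R(\bw),
\end{equation*}
where $R$ gathers the two partition-function terms and $R(\bar\bw)=0$. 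Evaluating this at the $\ds$ points $\bw(1),\dots,\bw(\ds)$ of (B4) gives, for each $(\bx,\bu)$, a linear system in the $\ds$ unknowns $\qsv^{(i)}(g^{(i)}(\bx),\bphi(\bu))$ whose coefficient matrix has rows $\bm{\bq}_{\mathrm{w}}(\bw(k))^{\top}$; (B4) makes that matrix invertible, and after substituting $\bx=\bm{f}(\bs)$, $\bu=\bvphi(\bv)$ (so $\bg(\bx)=\bs$, $\bphi(\bu)=\bv$, $\tg^{(m)}(\bx)=h^{(m)}(\bs)$ with $\bm{h}:=\tbg\circ\bm{f}$, and $\tbphi(\bu)=\bm{\chi}(\bv)$ with $\bm{\chi}:=\tbphi\circ\bvphi$) one obtains
\begin{equation*}
 \qsv^{(l)}(s^{(l)},\bv)=\sum_{m=1}^{\ds}M_{lm}\,\tilde{q}_{\mathrm{sv}}^{(m)}\bigl(h^{(m)}(\bs),\bm{\chi}(\bv)\bigr)+c_l
\end{equation*}
for a constant matrix $\bm{M}$ and a constant vector $\bm{c}$.

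Next I would differentiate, using that $\qsv^{(l)}$ depends on $\bs$ only through $s^{(l)}$. Differentiating with respect to $s^{(k)}$ for $k\neq l$ annihilates the left side and gives $\sum_m M_{lm}\,\partial_{\tilde{s}^{(m)}}\tilde{q}_{\mathrm{sv}}^{(m)}\cdot\partial h^{(m)}/\partial s^{(k)}=0$; differentiating this once more with respect to $v^{(n)}$ and collecting the mixed second derivatives into matrices --- $\bm{A}$ with $(l,m)$-entry $\partial^2\qsv^{(l)}/\partial s^{(l)}\partial v^{(m)}$, $\tilde{\bm{A}}$ its tilde analogue, $\bm{C}$ the Jacobian of $\bm{\chi}$, and $\bm{B}$ with $(l,m)$-entry $M_{lm}\,\partial h^{(m)}/\partial s^{(l)}$ --- yields both the identity $\bm{A}=\bm{B}\,\tilde{\bm{A}}\,\bm{C}$ and, for each $k\neq l$, the statement that $\bigl(M_{l1}\partial h^{(1)}/\partial s^{(k)},\dots,M_{l\ds}\partial h^{(\ds)}/\partial s^{(k)}\bigr)$ lies in the left null space of $\tilde{\bm{A}}\bm{C}$. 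Evaluating at $\bv=\bv'$ and invoking (B3) --- which says $\bm{A}$ has rank $\ds$ there, for every $\bs$ --- forces $\bm{B}$ to be invertible and $\tilde{\bm{A}}\bm{C}$ to have full row rank $\ds$, hence trivial left null space; therefore $M_{lm}\,\partial h^{(m)}/\partial s^{(k)}(\bs)=0$ for all $m$, all $k\neq l$, and all $\bs$.

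It remains to extract the permutation. Since $\bm{B}$ has no zero row, neither does $\bm{M}$, so for each $l$ we may pick $m(l)$ with $M_{l,m(l)}\neq0$; combined with the vanishing above this forces $\partial h^{(m(l))}/\partial s^{(k)}\equiv0$ for every $k\neq l$, i.e.\ $h^{(m(l))}$ is a function of $s^{(l)}$ alone. If $m(l_1)=m(l_2)$ with $l_1\neq l_2$, the corresponding row of the Jacobian of $\bm{h}$ would vanish identically, contradicting that $\bm{h}$ is a diffeomorphism; so $l\mapsto m(l)$ is a permutation, and writing $\sigma$ for its inverse we get $h^{(m)}(\bs)=\bar h^{(m)}(s^{(\sigma(m))})$ with each $\bar h^{(m)}$ strictly monotone (its derivative being the single surviving entry of the nonsingular Jacobian of $\bm{h}$). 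Since $\tbg=\bm{h}\circ\bg$, this is exactly $\tg^{(m)}=\bar h^{(m)}\circ g^{(\sigma(m))}$, the asserted recovery of $\bg$ up to a permutation and elementwise invertible nonlinear functions.

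I expect the main obstacle to be the rank bookkeeping in the second step: one must track that $\tilde{\bm{A}}$ is evaluated at $\bigl(\bm{h}(\bs),\bm{\chi}(\bv')\bigr)$, that (B3) is applied at $\bv=\bv'$ uniformly in $\bs$, and --- the subtle point --- that it is full row rank of $\tilde{\bm{A}}\bm{C}$, not merely of $\bm{A}$, that kills the null-space vector. A secondary technicality is justifying that $\bm{h}$ and $\bm{\chi}$ are bona fide diffeomorphisms of $\R{\ds}$ and $\R{\dv}$; this follows from $\bm{f},\bvphi$ being smooth embeddings and from $\pxuw^{\bg,\bphi}=\pxuw^{\tbg,\tbphi}$ forcing the two models to share the supports $\calM_{\bm{f}}$ and $\calM_{\bvphi}$.
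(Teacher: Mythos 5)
Your proposal is correct, and it reaches the paper's conclusion by a genuinely different (though closely related) route. The paper's proof takes the mixed derivative in $s^{(l)}$ and $v^{(m)}$ \emph{first}, obtaining the matrix identity $\bm{D}(\bw)\bm{Q}_{\mathrm{sv}}(\bs,\bv)=\bJ_{\tbs}(\bs)^{\top}\bm{D}(\bw)\bm{Q}_{\mathrm{sv}}(\tbs,\tbv)\bJ_{\tbv}(\bv)$, right-inverts $\bm{Q}_{\mathrm{sv}}$ at $\bv=\bv'$ using (B2)--(B3), and then invokes Lemma~\ref{lem:diagonals} at the $\ds$ points of (B4) to conclude that $\bJ_{\tbs}$ is a permuted diagonal matrix. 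You instead spend (B4) \emph{before} differentiating: inverting the $\ds\times\ds$ linear system over $\bw(1),\dots,\bw(\ds)$ eliminates the $\bw$-dependence entirely and yields a single functional equation $\qsv^{(l)}(s^{(l)},\bv)=\sum_m M_{lm}\tilde{q}_{\mathrm{sv}}^{(m)}(h^{(m)}(\bs),\bm{\chi}(\bv))+c_l$ with a \emph{constant} mixing matrix $\bm{M}$; the subsequent cross-derivative in $s^{(k)}$ ($k\neq l$) and $v^{(n)}$, the factorization $\bm{A}=\bm{B}\tilde{\bm{A}}\bm{C}$, and the left-null-space argument at $\bv=\bv'$ then replace Lemma~\ref{lem:diagonals} by elementary rank bookkeeping. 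What your route buys: it is self-contained (no imported diagonals lemma), it makes the role of (B4) transparent (only the linear independence is used; the nonvanishing of $\bqw^{(i)}$ is not needed, since you never invert $\bm{D}(\bw)$), and the constancy of $\bm{M}$ makes the final permutation extraction clean because the index choice $m(l)$ cannot drift with $\bs$. What the paper's route buys: brevity given the lemma, and structural parallelism with the proofs of Theorems~\ref{theo:partial-IDF} and~\ref{theo:full-IDF}. Both arguments rest on the same two pivots --- (B3) forcing $\bm{B}$ (respectively $\bm{M}(\bs)$) to be invertible and the tilde second-derivative matrix to have full row rank, and (B4) supplying $\ds$ independent directions in $\bw$ --- and both share the same lightly justified step that $\bm{h}=\tbg\circ\bm{f}$ is a diffeomorphism of $\R{\ds}$ (equality of the supports $\calM_{\bm{f}}=\calM_{\tilde{\bm{f}}}$), which you at least flag explicitly.
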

  The proof is given by Appendix~\ref{app:partial-IDF-mild}.
  Assumption~(B2) requires that the dimensionality of latent variables
  $\bv$ is more than or equal to $\ds$, and is useful when
  low-dimensional auxiliary data $\bw$ is only available and
  Assumption~(A2) in Theorem~\ref{theo:partial-IDF} cannot be satisfied.
  On the other hand, this usefulness costs at simplifying the
  conditional distribution $\psvw$ as in Assumption~(B1). A similar
  assumption as Assumption (B1) can be found
  in~\citet[Eq.(13)]{sasaki2022representation}, but our assumption is
  more general because it can be seen as a special case of
  Assumption~(B1) where $\qsv^{(i)}(s,\bv)=\qsv^{(i)}(s,v^{(i)})$ and
  $\qw^{(i)}(\bw)$ is a constant function. Assumption~(B4) implies that
  $\psvw$ is sufficiently diverse. For instance, if $\qw(\bw)$ is a
  constant, Assumption~(B4) is never satisfied.
  \subsection{Full identifiability}
  Finally, we investigate the \emph{full} identifiability of
  $\pxuw^{\bg,\bphi}$. To this end, we assume that $\ds=\dv=:d$ and the
  conditional distribution is given by
  \begin{align}
   &\log\psvw(\bs,\bv|\bw)
   =\sum_{i=1}^{d}q^{(i)}(s^{(i)},v^{(i)},\bw)
   -\log{Z}(\bw), \label{cond-wss}
  \end{align}
  where $Z(\bw)$ denotes the partition function. Unlike $\psvw$ in
  previous theorems, both $\bs$ and $\bv$ appear elementwisely on the
  right-hand side of~\eqref{cond-wss}.  With some fixed point
  $\bar{\bw}$ of $\bw$, we express
  ${\bq}^{(i)}(s,v,\bw):=q^{(i)}(s,v,\bw)-q^{(i)}(s,v,\bar{\bw})$. Then,
  a $d$-dimensional vector is defined by
  \begin{align}
   \bm{d}(\bs,\bv,\bw)&=\left(\frac{\partial^2}{\partial{s^{(1)}}\partial{v^{(1)}}}{\bq}^{(1)}(s^{(1)},v^{(1)},\bw),\dots,\frac{\partial^2}{\partial{s^{(d)}}\partial{v^{(d)}}}{\bq}^{(d)}(s^{(d)},v^{(d)},\bw)\right)^{\top}.
   \label{defi:d-vec}
  \end{align}  
  We now establish some conditions for full identifiability in the
  following theorem:
  \begin{Theo}
   \label{theo:full-IDF} Suppose that $\ds=\dv=:d$ and the following
   assumptions hold:
   \begin{enumerate}
    \item[(C1)] The conditional distribution $\psvw$ can be expressed
		as~\eqref{cond-wss}.
				
    \item[(C2)] There exist a single point $\bv'$ of $\bv$ and $d+1$
		points, $\bar{\bw}, \bw(1),\dots,\bw(d)$, such that the
		followings hold:
		\begin{itemize}
		 \item All of elements in
		       $\bm{d}(\bs,\bw):=\bm{d}(\bs,\bv',\bw)$ are
		       nonzeros for all $\bs$ and
		       $\bw\in\{\bw(1),\dots,\bw(d)\}$.

		 \item $\bm{d}(\bs,\bw(1)),\dots,\bm{d}(\bs,\bw(d))$ are
		       linearly independent for all $\bs$.
		\end{itemize}		
   \end{enumerate}
   Then, $\pxuw^{\bg,\bphi}$ is fully identifiable up to their
   permutation and elementwise functions.
  \end{Theo}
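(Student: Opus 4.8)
The plan is to follow the two‑stage scheme behind the partial‑identifiability proofs, now exploiting that in \eqref{cond-wss} \emph{both} $\bs$ and $\bv$ appear elementwise, so that the Jacobians of the two coordinate changes can be constrained at once. Suppose $\pxuw^{\bg,\bphi}=\pxuw^{\tbg,\tbphi}$. After noting that equal densities force $\calM_{\bm{f}}=\calM_{\tbf}$ and $\calM_{\bvphi}=\calM_{\tbvphi}$, I would reparametrize in the coordinates $\tbs=\tbg(\bx)$, $\tbv=\tbphi(\bu)$ and introduce the diffeomorphisms $\bm{h}:=\bg\circ\tbf$ and $\bm{e}:=\bphi\circ\tbvphi$ of $\R{d}$, so that $\bs=\bm{h}(\tbs)$ and $\bv=\bm{e}(\tbv)$. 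Taking logarithms in \eqref{cond-xuw} for both parametrizations and subtracting the identities evaluated at $\bw=\bar{\bw}$ cancels the two square‑root Jacobian factors (independent of $\bw$) and leaves only a $\bw$‑only remainder $c(\bw)$ from the partition functions; using \eqref{cond-wss} this gives
\[
 \sum_{i=1}^{d}\bq^{(i)}\!\bigl(h^{(i)}(\tbs),e^{(i)}(\tbv),\bw\bigr)=\sum_{i=1}^{d}\tilde{\bq}^{(i)}\!\bigl(\ts^{(i)},\tv^{(i)},\bw\bigr)+c(\bw).
\]

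Next I would apply the operator $\partial^2/\partial\ts^{(a)}\partial\tv^{(b)}$. The $\bw$‑only term $c(\bw)$ disappears, and on the right only the $i=a=b$ term survives, so for $a\neq b$ the right‑hand side vanishes and, with $\bm{d}$ the cross‑derivative vector of \eqref{defi:d-vec},
\[
 \sum_{i=1}^{d}\bm{d}\!\bigl(\bm{h}(\tbs),\bm{e}(\tbv),\bw\bigr)_{i}\,\frac{\partial h^{(i)}}{\partial\ts^{(a)}}(\tbs)\,\frac{\partial e^{(i)}}{\partial\tv^{(b)}}(\tbv)=0,\qquad a\neq b.
\]
Evaluating at $\tbv^{\star}:=\bm{e}^{-1}(\bv')$ turns the $\bv$‑argument of $\bm{d}$ into $\bv'$, so $\bm{d}$ becomes the $\bm{d}(\bs,\bw)$ of Assumption~(C2). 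Taking $\bw=\bw(1),\dots,\bw(d)$ then shows that, for every $\tbs$ and every $a\neq b$, the $\R{d}$‑vector with entries $\frac{\partial h^{(i)}}{\partial\ts^{(a)}}(\tbs)\frac{\partial e^{(i)}}{\partial\tv^{(b)}}(\tbv^{\star})$ is orthogonal to the $d$ linearly independent vectors $\bm{d}(\bm{h}(\tbs),\bw(1)),\dots,\bm{d}(\bm{h}(\tbs),\bw(d))$, hence is zero: $\frac{\partial h^{(i)}}{\partial\ts^{(a)}}(\tbs)\,\frac{\partial e^{(i)}}{\partial\tv^{(b)}}(\tbv^{\star})=0$ for all $i$, all $\tbs$ and all $a\neq b$. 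Because $\bm{J}_{\bm{h}}(\tbs)$ and $\bm{J}_{\bm{e}}(\tbv^{\star})$ are invertible, a short bookkeeping argument forces each of their rows to have a single nonzero entry, located in the same column for the two matrices, with the resulting permutation $\sigma$ the same for both and independent of $\tbs$. Thus $h^{(i)}(\tbs)=\bar{h}^{(i)}(\ts^{(\sigma(i))})$ with scalar diffeomorphisms $\bar{h}^{(i)}$, i.e. $\bg$ is recovered up to the permutation $\sigma$ and elementwise nonlinear functions.

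Finally I would upgrade the structure of $\bm{e}$ from $\tbv^{\star}$ to all of $\R{d}$. Substituting the form of $\bm{h}$ back and using $(\bar{h}^{(i)})'\neq0$, the $a\neq b$ equation collapses to $\bm{d}(\bm{h}(\tbs),\bm{e}(\tbv),\bw)_{i}\,\frac{\partial e^{(i)}}{\partial\tv^{(b)}}(\tbv)=0$ for all $i$, all $b\neq\sigma(i)$ and all $\tbs,\tbv,\bw$. Since $\ts^{(\sigma(i))}\mapsto h^{(i)}(\tbs)=\bar{h}^{(i)}(\ts^{(\sigma(i))})$ and $\tbv\mapsto e^{(i)}(\tbv)$ are onto $\R{}$, wherever $\partial e^{(i)}/\partial\tv^{(b)}\neq0$ the value $e^{(i)}(\tbv)$ must lie in the closed set of $v$ on which $\partial^2\bq^{(i)}/\partial s\,\partial v$ vanishes identically in $(s,\bw(k))$ — a set which by (C2) misses a neighbourhood of $(v')^{(i)}$. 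A continuity/connectedness argument (nonvanishing of $\partial e^{(i)}/\partial\tv^{(b)}$ would make $e^{(i)}$ send an open set onto an interval contained in that degenerate set) then yields $\partial e^{(i)}/\partial\tv^{(b)}\equiv0$ for $b\neq\sigma(i)$, hence $e^{(i)}(\tbv)=\bar{e}^{(i)}(\tv^{(\sigma(i))})$, so $\bphi$ is recovered up to the \emph{same} permutation $\sigma$ and elementwise functions. This last step is the main obstacle: Assumption~(C2) pins down $\bm{d}$ only at $\bv=\bv'$, so ruling out that $\bm{e}$ mixes coordinates away from $\tbv^{\star}$ requires the extra argument that the degeneracy set of $\partial^2\bq^{(i)}/\partial s\,\partial v$ has empty interior near $\bv'$; everything preceding it is routine once the differentiation bookkeeping is in place.
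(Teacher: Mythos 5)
Your argument through the Jacobian bookkeeping is essentially the paper's proof in mirror image. The paper differentiates the identity~\eqref{ide-cond} with respect to the untilded coordinates $s^{(l)},v^{(m)}$ (so that only the diagonal terms survive on the \emph{left}), writes the result as the matrix identity $\bD(\bs,\bv,\bw)=\bm{J}_{\tbs}(\bs)^{\top}\bD(\tbs,\tbv,\bw)\bm{J}_{\tbv}(\bv)$, fixes $\bv=\bv'$, and invokes Lemma~\ref{lem:diagonals}, which packages exactly your ``orthogonal to $d$ linearly independent vectors $\bm{d}(\cdot,\bw(k))$, hence zero, hence each row of each Jacobian has a single nonzero entry in matching columns'' reasoning. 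So everything up to and including your conclusion that $h^{(i)}(\tbs)=\bar{h}^{(i)}(\ts^{(\sigma(i))})$ and that $\bm{J}_{\bm{e}}(\tbv^{\star})$ is diagonal-times-permutation at the single point $\tbv^{\star}$ reproduces the paper's argument, just with the roles of $(\bs,\bv)$ and $(\tbs,\tbv)$ interchanged and the lemma unpacked by hand.

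Your final paragraph goes beyond the paper, and that is where the only real issue lies. The paper's proof stops after concluding that $\bm{J}_{\tbs}(\bs)$ (for all $\bs$) and $\bm{J}_{\tbv}(\bv')$ (at the one point $\bv'$) have monomial structure, and declares the proof complete; it never propagates the elementwise structure of $\bphi\circ\tbvphi$ away from $\bv'$, even though full identifiability of $\bphi$ requires exactly that. You have correctly diagnosed why this is nontrivial: Assumption~(C2) constrains $\bm{d}$ only at $\bv=\bv'$, so nothing rules out coordinate mixing elsewhere without further input. Your proposed continuity/connectedness patch does rest on an unstated hypothesis --- that the set of $v$ on which $\partial^2\bq^{(i)}/\partial s\,\partial v$ degenerates has empty interior near $(v')^{(i)}$ --- which does not follow from (C2), so as written that step is a genuine gap in your argument. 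But it is a gap the paper's own proof does not close either; on the $\bphi$ side your attempt is, if anything, more explicit about what remains to be shown than the published proof.
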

  The proof is given in Appendix~\ref{app:full-IDF}. Again, the
  dimensionality relation appeared in this theorem as well: For full
  identifiability, both latent vectors $\bs$ and $\bv$ have the same
  dimension, i.e., $\ds=\dv=:d$. In fact, this condition has been
  \emph{implicitly} used in EBM~\eqref{cond-xu}. Based on this
  dimensionality condition, the conditional distribution $\psvw$ is
  simplified as in Assumption~(C1). Assumption~(C2) would have similar
  implication as Assumption~(B4).
  \subsection{Removing the indeterminacy of nonlinear functions}
  As in existing works of nonlinear ICA, the indeterminacy of
  \emph{unknown} elementwise nonlinear functions in previous theorems
  makes it difficult to interpret an estimate of $\bg(\bx)$ in
  practice. Thus, it would be important to understand when this
  indeterminacy is removed. To this end, we establish the following
  proposition:
  \begin{Prop}
   \label{prop:nonlin-func} Suppose that all assumptions in
   Theorem~\ref{theo:full-IDF} hold. Furthermore, we assume that
   $\bq^{(i)}(s^{(i)},v^{(i)},\bw)$ for $i=1\dots,d$ fulfills the
   following equation:
   \begin{align}
    \frac{\partial^2}{\partial{s^{(i)}}\partial{v^{(i)}}}{\bq}^{(i)}(s^{(i)},v^{(i)},\bw)
    &=\alpha^{(i)}(v^{i},\bw),
    \label{model-assump}
   \end{align}
   where $\alpha^{(i)}(v^{(i)},\bw)$ is some function satisfying
   Assumptions~(C2). Then, $\pxuw^{\bg,\bphi}$ is partially identifiable
   with respect to $\bg$ up to a permutation of the elements in $\bg$,
   their scales and additive constants.
  \end{Prop}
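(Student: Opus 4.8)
The plan is to invoke Theorem~\ref{theo:full-IDF} to obtain the elementwise-plus-permutation structure, and then to show that the extra restriction~\eqref{model-assump} forces those elementwise functions to be affine. Suppose $\pxuw^{\bg,\bphi}(\bx,\bu|\bw)=\pxuw^{\tbg,\tbphi}(\bx,\bu|\bw)$ (both parametrisations sharing the same $\psvw$, as in the definition of identifiability). By Theorem~\ref{theo:full-IDF}, after relabelling the components of $\tbg$ and $\tbphi$ by a permutation, we may write $\tilde{g}^{(i)}(\bx)=h^{(i)}(g^{(i)}(\bx))$ and $\tilde{\phi}^{(i)}(\bu)=k^{(i)}(\phi^{(i)}(\bu))$ for all $i=1,\dots,d$, where $h^{(i)},k^{(i)}$ are smooth, strictly monotone, hence invertible, scalar functions. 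It therefore suffices to prove that every $h^{(i)}$ is affine, $h^{(i)}(t)=a^{(i)}t+b^{(i)}$ with $a^{(i)}\neq0$; indeed this gives $\tilde g^{(i)}=a^{(i)}g^{(i)}+b^{(i)}$ up to the permutation, which is precisely the claimed indeterminacy.

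The second step is to strip the Jacobian factors from the density identity. Writing $\bh:=(h^{(1)},\dots,h^{(d)})$ as a map acting componentwise on $\R{d}$, we have $\tbg=\bh\circ\bg$, so the chain rule together with the diagonal form of $\bh$ yields $\sqrt{\det\bG_{\tbg}(\bx)}=\bigl(\prod_{i=1}^{d}|(h^{(i)})'(g^{(i)}(\bx))|\bigr)\sqrt{\det\bG_{\bg}(\bx)}$, and likewise for $\bphi$. Taking logarithms in the density identity through~\eqref{cond-xuw} and~\eqref{cond-wss}, setting $s^{(i)}=g^{(i)}(\bx)$ and $v^{(i)}=\phi^{(i)}(\bu)$, the partition functions and the $\tfrac12\log\det\bG$ terms cancel, leaving a sum of scalar potentials plus the $\bw$-independent terms $\sum_i\log|(h^{(i)})'(s^{(i)})|$ and $\sum_i\log|(k^{(i)})'(v^{(i)})|$. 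Subtracting the identity evaluated at $\bw=\bar{\bw}$ removes those $\bw$-independent terms and replaces each $q^{(i)}$ by $\bq^{(i)}$, giving, for all $\bs,\bv$ and admissible $\bw$ (using that $\bg$ and $\bphi$ are onto, so $s^{(i)},v^{(i)}$ range freely),
\begin{align*}
 \sum_{i=1}^{d}\bq^{(i)}(s^{(i)},v^{(i)},\bw)
 =\sum_{i=1}^{d}\bq^{(i)}\bigl(h^{(i)}(s^{(i)}),k^{(i)}(v^{(i)}),\bw\bigr).
\end{align*}

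The third step is differentiation. Applying $\partial/\partial s^{(j)}$ and then $\partial/\partial v^{(j)}$ to this identity kills all summands except $i=j$ on each side, and~\eqref{model-assump}, namely that $\frac{\partial^2}{\partial s\,\partial v}\bq^{(j)}(s,v,\bw)=\alpha^{(j)}(v,\bw)$ does not depend on $s$, gives
\begin{align*}
 \alpha^{(j)}(v^{(j)},\bw)
 =(h^{(j)})'(s^{(j)})\,(k^{(j)})'(v^{(j)})\,\alpha^{(j)}\bigl(k^{(j)}(v^{(j)}),\bw\bigr).
\end{align*}
Here the left-hand side, as well as the product $(k^{(j)})'(v^{(j)})\,\alpha^{(j)}(k^{(j)}(v^{(j)}),\bw)$, is independent of $s^{(j)}$. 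Evaluating at $\bv=\bv'$ and at any $\bw=\bw(m)$ from Assumption~(C2), the left-hand side is the $j$-th entry of $\bm{d}(\bs,\bw(m)):=\bm{d}(\bs,\bv',\bw(m))$, which is nonzero by~(C2) (and, under~\eqref{model-assump}, does not even depend on $\bs$); hence the factor multiplying $(h^{(j)})'(s^{(j)})$ is a nonzero constant in $s^{(j)}$, so $(h^{(j)})'(s^{(j)})$ itself is a nonzero constant. Thus each $h^{(j)}$ is affine with nonzero leading coefficient, completing the proof.

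The step I expect to require the most care is purely bookkeeping, not a new idea: ensuring that the permutation produced by Theorem~\ref{theo:full-IDF} pairs the components of $\tbg$ with those of $\tbphi$ consistently, so that the elementwise functional identity above is legitimate. In fact the differentiation step reconfirms this, since a mismatched pairing would make the right-hand side independent of $v^{(j)}$ after differentiating in $s^{(j)}$, forcing $\alpha^{(j)}(v^{(j)},\bw)\equiv0$ and contradicting~(C2). The only other delicate point is the Jacobian-determinant identity $\sqrt{\det\bG_{\tbg}(\bx)}=(\prod_i|(h^{(i)})'(s^{(i)})|)\sqrt{\det\bG_{\bg}(\bx)}$: it is a routine consequence of the chain rule and the diagonal structure of $\bh$, but deserves explicit mention because $\bg$ is a chart on the embedded submanifold $\calM_{\bm{f}}$ rather than a map on all of $\R{\dx}$.
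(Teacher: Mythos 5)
Your proof is correct and follows essentially the same route as the paper's: both rest on the cross-derivative identity behind Theorem~\ref{theo:full-IDF} together with the observation that~\eqref{model-assump} makes the mixed partial independent of $s^{(j)}$, which forces the derivative of the residual elementwise map to be a nonzero constant (hence affine). The only difference is presentational --- the paper stays in matrix form and reads off that $\bm{J}_{\tbs}(\bs)=(\bD_{\mathrm{w'\tv'}}\bm{J}'_{\tbv})^{-\top}\bD_{\mathrm{w'v'}}$ is constant in $\bs$, whereas you first invoke the conclusion of Theorem~\ref{theo:full-IDF} and then run the same computation componentwise.
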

  The proof of Proposition~\ref{prop:nonlin-func} is deferred to
  Appendix~\ref{app:nonlin-func}. Proposition~\ref{prop:nonlin-func}
  shows that $\bg(\bx)$ is essentially equal to the latent variable
  $\bs$ itself up to a permutation and scales.  Thus, there is no
  indeterminacy for elementwise nonlinear functions unlike previous
  theorems. This is a significant step for interpretation of an estimate
  of $\bg(\bx)$ in practice. Moreover,
  Proposition~\ref{prop:nonlin-func} would be surprising because the
  conclusion is the essentially same as \emph{linear}
  ICA~\citep{comon1994independent}, while our mixing function
  in~\eqref{generative-model} is \emph{nonlinear}.  A possible reason
  why Proposition~\ref{prop:nonlin-func} removes the indeterminacy of
  nonlinear functions in Theorem~\ref{theo:full-IDF} is that the
  nonlinear function ${\bq}^{(i)}(s^{(i)},v^{(i)},\bw)$ in $\psvw$ is
  presumably one of the main factors because~\eqref{model-assump}
  indicates that it is a linear function of $s^{(i)}$.
  % However, removing the
  % indeterminacy of elementwise nonlinear functions incurs further
  % restriction of the conditional distribution $\psvw$.
  %
  \subsection{Reverse data generative model}
  \label{ssec:reverse}
  The theoretical results in this section hold even when the data
  generative process is reversed: First, latent variables $\bs$ and
  $\bv$ follow a joint distribution $\psv(\bs,\bv)$. Then, given $\bs$
  and $\bv$, auxiliary data $\bw$ follows a conditional distribution
  $\pwsv(\bw|\bs,\bv)$. All of theorems and proposition above hold when
  $\pwsv(\bw|\bs,\bv)$ is equal to the right-hand side
  on~\eqref{cond-svw1} or~\eqref{cond-wss} up to the partition function,
  and all of the other conditions are satisfied. For instance, the same
  conclusion of Theorem~\ref{theo:partial-IDF} is holds when
  Assumption~(A1) is replaced with the following conditional
  distribution of $\bw$ given $\bs$ and $\bv$:
  \begin{align*}
   \pwsv(\bw|\bs,\bv)=\sum_{i=1}^{\ds}q^{(i)}(s^{(i)},\bv,\bw)
   -\log{Z}(\bs,\bv).
  \end{align*}
  More details including other theorems are discussed in
  Appendix~\ref{app:reverse}.
  
  This reverse view of data generative process yields interesting
  insights. First, we do not make \emph{any} assumptions on
  $\psv(\bs,\bv)$ as well as
  $\pxu^{\bg,\bphi}(\bx,\bu)=\psv(\bg(\bx),\bphi(\bu))\sqrt{\det(\bG_{\bg}(\bx))}\sqrt{\det(\bG_{\bphi}(\bu))}$.
  This means that unlike EBM~\eqref{cond-xu}, $\bx$ (or $\bs$) and $\bu$
  (or $\bv$) are not necessarily dependent, and can be even
  statistically independent. This property could be important for graph
  data where input data vectors associated with nodes can be
  i.i.d~\citep{pmlr-v80-okuno18a,okuno2019robust}. Another point is that
  the marginal distribution $\ps(\bs)$ has almost no assumptions. In
  contrast with linear ICA where the independence and nonGaussian of
  latent variables are assumed, in this work, $s^{(1)},\dots,s^{(\ds)}$
  can be Gaussians and/or dependent. In fact, we numerically demonstrate
  that $s^{(1)},\dots,s^{(\ds)}$ can be identified even when they are
  correlated Gaussians.
 \section{Related works}
 \label{sec:problem}
 Recently, there seem to exist two approaches for identifiability of
 latent variable models. One approach is to make structural assumptions
 on the mixing function $\bm{f}$ with the (nonconditional) independence
 assumption. Independent mechanism analysis (IMA) assumes that the
 columns in the Jacobian of $\bm{f}(\bs)$ are
 orthogonal~\citep{gresele2021independent}. More general function
 classes for identifiable models in nonlinear ICA are theoretically
 investigated in~\citet{buchholz2022function}.
 \citet{zheng2022identifiability} proved the identifiability of the
 latent variables with a structural sparse assumption on the Jacobian of
 $\bm{f}(\bs)$. Unlike this approach, we do not make structural
 assumptions on $\bm{f}$ and thus, our mixing function $\bm{f}$ is more
 general.
 
 Another approach is nonlinear ICA with auxiliary data, and makes the
 conditional independence assumption of latent variables given auxiliary
 data~\citep{hyvarinen2018nonlinear,sasaki2022representation}.  The
 conclusion is essentially the same as our work. However, our model
 includes two data generative models as in~\eqref{generative-model}, and
 $\bs$ and $\bv$ have equal or lower dimensions than $\bx$ and $\bu$
 (i.e., $\ds\leq\dx$ and $\dv\leq\du$) respectively, while previous work
 supposes that a single data general model, and $\bx$ and $\bs$ have the
 same dimension (i.e., $\dx=\ds$). Thus, we extend the data space to an
 embedded manifold.~\citet{NEURIPS2021_0cdbb4e6} also takes a similar
 manifold setting into account, but still considers a single data
 generative model.  Furthermore, we proved that the indeterminacies of
 nonlinear ICA are the same as linear one under certain conditions.
 
 Another relevant work is multiview nonlinear
 ICA~\citep{gresele2019incomplete,locatello2020weakly,lyu2022understanding}.
 Multiview nonlinear ICA usually considers multiple general models as
 in~\eqref{generative-model}, but assumes the shared latent variables in
 the generative models. On the other hand, we consider two latent
 variables $\bs$ and $\bv$, and our work could be more general because
 these variables d not necessarily have shared variables and can be even
 independent as discussed in Section~\ref{ssec:reverse}.
 \section{Application to graph embedding}
 This section applies Theorem~\ref{theo:full-IDF} to graph data, and
 establishes identifiable conditions, one of which includes an
 interesting insight. Finally, we propose a practical method for
 identifiable graph embedding.
  \subsection{Generative model for graph data}
  Graph data consists of data vectors associated with nodes and discrete
  link weights. The link weights usually reflect the strength of
  connections between nodes.  Here, we propose a generative model of
  data vectors and link weights by extending~\citet{pmlr-v80-okuno18a}
  and~\citet{okuno2019robust} to a latent variable model.  We first
  assume that a pair of the vectors of latent variables, $\bs\in\R{\ds}$
  and $\bs'\in\R{\ds}$, follow a joint probability distribution with the
  density $\pss(\bs,\bs')$. We do not necessarily assume that $\bs$ and
  $\bs'$ and/or the elements $s^{(1)},\dots,s^{(\ds)}$ in $\bs$ are
  independent. Then, data vectors $\bx$ and $\bx'$ are generated from
  the shared nonlinear mixing function $\bm{f}:\R{\ds}\to{\R{\dx}}$ with
  $\ds\leq\dx$ as
  \begin{align}
   \bx=\bm{f}(\bs)\quad\text{and}
   \quad\bx'=\bm{f}(\bs'). \label{generative-model-graph}
  \end{align}
  Furthermore, we assume that given $\bs$ and $\bs'$, the discrete link
  weight $w\in\{0,1,\dots,K\}$ is i.i.d. sampled from the conditional
  distribution $\pws(w|\bs,\bs')$. Here, we call $K$ as the
  \emph{maximum link state} throughout this paper. With the inverse
  $\bg=\bm{f}^{-1}$, the conditional distribution of $\bw$ given $\bx$
  and $\bx'$ can be expressed as
  $\pwx^{\bg}(w|\bx,\bx')=\pws(w|\bg(\bx),\bg(\bx'))$. Thus, we say
  $\pwx^{\bg}$ is identifiable when
  \begin{align}
   \pwx^{\bg}(w|\bx,\bx')=\pwx^{\tbg}(w|\bx,\bx')~~\Rightarrow~~\bg=\tbg.
   \label{identifiablity-graph}
  \end{align}

  This generative model for graph data is closed related to the proposed
  generative model in Section~\ref{sec:generative-model}: $\bs'$
  corresponds to $\bv$, and link weight $w$ is one dimensional discrete
  variable, which is a special case of auxiliary data $\bw$. Although
  this data generative process for graph data is reverse to the process
  in Section~\ref{sec:generative-model} yet theorems and proposition in
  Section~\ref{sec:identifiability} are applicable to this statistical
  model as discussed in Section~\ref{ssec:reverse}.
  \subsection{Identifiability conditions on graph data}
  As in Section~\ref{sec:identifiability}, we establish identifiable
  conditions for graph data.  Since it is essentially the same as
  Theorem~\ref{theo:full-IDF}, we summarize them as the following
  corollary without the proof:
  \begin{Coro}
   \label{coro:graphNICA} Suppose that the the following assumptions
   hold:
   \begin{enumerate}
    \item[(D1)] The conditional distribution of $w$ given $\bs$ and
		$\bs'$ can be expressed as follows:
		\begin{align}
		 \log\pws(w|\bs,\bs')&=\sum_{i=1}^{\ds}
		 q^{(i)}(w,s^{(i)},s^{(i)\prime})
		 -\log{Z}(\bs,\bs'). \label{cond-wss2}
		\end{align}
		where $Z(\bs,\bs')$ denotes the partition function.

    \item[(D2)] The maximum link state $K$ is equal to or larger than
		$\ds$, i.e., ${\ds}\leq{K}$.
	       
    \item[(D3)] There exists a single fixed point $\bar{\bs}$, and $d+1$
		points, $\bar{w}, w(1),\dots,w(\ds)$, of $w$ such that
		the followings are hold:
		\begin{itemize}
		 \item All of elements in
		       $\bm{d}(\bs,w):=\bm{d}(\bs,\bs'=\bar{\bs},w)$ are
		       nonzeros for all $\bs$ and
		       $w\in\{w(1),\dots,w(\ds)\}$ where
		       $\bm{d}(\bs,\bs',w)$ is similarly defined
		       as~\eqref{defi:d-vec}.
		       
		 \item $\bm{d}(\bs,w(1)),\dots,\bm{d}(\bs,w(\ds))$ are
		       linearly independent for all $\bs$.
		\end{itemize}
   \end{enumerate}
   Then, $\pwx^{\bg}$ is identifiable up to a permutation of the
   elements in $\bg$ and elementwise nonlinear functions.
  \end{Coro}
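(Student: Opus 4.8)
The plan is to derive Corollary~\ref{coro:graphNICA} as a specialization of Theorem~\ref{theo:full-IDF}, read in the reverse-generative-process form of Section~\ref{ssec:reverse} (Appendix~\ref{app:reverse}). First I would fix the dictionary between the two models. The graph model observes $(\bx,\bx',w)$ with $\bx=\bm{f}(\bs)$, $\bx'=\bm{f}(\bs')$, and $w\in\{0,1,\dots,K\}$ drawn from $\pws(w|\bs,\bs')$; I identify the second latent vector $\bv$ of Section~\ref{sec:generative-model} with $\bs'$ and its mixing function $\bvphi$ with the \emph{same} $\bm{f}$, so that $\dv=\ds=:d$, $\calM_{\bvphi}=\calM_{\bm{f}}$, and the inverse $\bphi$ coincides with $\bg$, while the auxiliary variable $\bw$ is identified with the discrete link weight $w$, so $\dw=1$. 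Under this dictionary the factorization (D1) of $\log\pws(w|\bs,\bs')$ into $\sum_{i=1}^{\ds}q^{(i)}(w,s^{(i)},s^{(i)\prime})$ minus a log-partition function is precisely the reverse-model replacement of the conditional form~\eqref{cond-wss} demanded by (C1), and the vector $\bm{d}(\bs,\bs',w)$ of (D3), whose $i$-th entry is $\frac{\partial^2}{\partial s^{(i)}\partial s^{(i)\prime}}\bq^{(i)}(w,s^{(i)},s^{(i)\prime})$ with $\bq^{(i)}(w,s,s'):=q^{(i)}(w,s,s')-q^{(i)}(\bar w,s,s')$, is exactly the $\bm{d}$ of~\eqref{defi:d-vec} under $\bv\mapsto\bs'$ and $\bw\mapsto w$.

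Next I would check that (D2)--(D3) reproduce Assumption~(C2). Assumption~(C2) asks for one point $\bv'$ of $\bv$ and $d+1$ distinct points $\bar\bw,\bw(1),\dots,\bw(d)$ of $\bw$ such that all entries of $\bm{d}(\bs,\bw(j))$ are nonzero and $\bm{d}(\bs,\bw(1)),\dots,\bm{d}(\bs,\bw(d))$ are linearly independent for all $\bs$. Under the dictionary this becomes: one fixed point $\bar\bs$ of $\bs'$ together with $\ds+1$ distinct link states $\bar w,w(1),\dots,w(\ds)$ having exactly those two properties, which is exactly (D3). The only additional requirement is that $\ds+1$ distinct values actually exist in the support $\{0,1,\dots,K\}$, i.e., $K+1\geq\ds+1$, which is precisely the dimensionality assumption (D2). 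I would stress that discreteness of $w$ introduces nothing new here: the argument behind Theorem~\ref{theo:full-IDF} (Appendix~\ref{app:full-IDF}) conditions on the auxiliary variable and evaluates the conditional density only at the finitely many points named in (C2) --- it never integrates over the auxiliary variable --- so it carries over verbatim to a discrete $w$ as long as (D2) supplies enough states, mirroring the remark after Theorem~\ref{theo:partial-IDF} that auxiliary data may be discrete or continuous.

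Finally I would transport the conclusion back. By construction $\pwx^{\bg}(w|\bx,\bx')=\pws(w|\bg(\bx),\bg(\bx'))$ is exactly the conditional law of $w$ given $(\bx,\bx')$; it carries neither a $\pss$ factor nor any volume-correction (Jacobian) factor, because $\bg$ is a bijection and, given $\bs$ and $\bs'$, $w$ is drawn from $\pws$. Suppose a second admissible model, specified by $\tbg$ together with a conditional distribution of the same form (D1) and satisfying (D2)--(D3), induces the same graph conditional for all $w,\bx,\bx'$. Because the graph model uses a single shared mixing function, this alternative necessarily has second inverse map $\tbphi=\tbg$, so under the dictionary above the hypothesis is precisely that of the reverse-generative-process version of Theorem~\ref{theo:full-IDF} applied with $\bphi=\bg$ and $\tbphi=\tbg$. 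That theorem then forces $\bg=\tbg$ up to a permutation of coordinates and elementwise nonlinear reparameterizations (and, redundantly, $\bphi=\tbphi$), which is the identifiability asserted in~\eqref{identifiablity-graph}.

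The main obstacle is bookkeeping rather than conceptual, and it has two parts. First, one must confirm that the proof of Theorem~\ref{theo:full-IDF} never secretly uses $\bvphi\neq\bm{f}$ or any (in)dependence between the two latent vectors --- that the two inverse maps enter the argument symmetrically and may be set equal --- so that the shared-mixing-function specialization $\bphi\equiv\bg$ is legitimate; this is the step I would scrutinize most, re-reading Appendix~\ref{app:full-IDF} to verify the symmetry. Second, one must check that the conditional-only notion of graph identifiability in~\eqref{identifiablity-graph} is what Theorem~\ref{theo:full-IDF} actually delivers: the argument there begins by canceling the pieces of the log-conditional that do not depend on $w$ --- here the $(\bs,\bs')$-dependent log-partition function drops out after subtracting the value at a reference link state $\bar w$, and $\pwx^{\bg}$ carries no Jacobian factors at all --- after which the mixed-derivative and linear-independence machinery of (C2), now supplied by (D3), applies unchanged. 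Matching (D1)--(D3) with (C1)--(C2) and counting the $\ds+1$ required link states against the support size $K+1$ (the content of (D2)) is then routine transcription.
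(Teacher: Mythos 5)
Your proposal is correct and follows exactly the route the paper intends: the paper states Corollary~\ref{coro:graphNICA} without proof precisely because it is Theorem~\ref{theo:full-IDF} read in the reverse-generative form of Section~\ref{ssec:reverse} (Appendix~\ref{app:reverse}) under the dictionary $\bv\leftrightarrow\bs'$, $\bphi\leftrightarrow\bg$, $\bw\leftrightarrow w$, with (D2) supplying the $\ds+1$ distinct link states that (C2)/(D3) require. Your bookkeeping on the absence of Jacobian factors, the cancellation of $\log Z(\bs,\bs')$ after subtracting the reference state $\bar w$, and the counting $K+1\geq\ds+1$ matches the paper's own remarks following the corollary.
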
  

  In contrast with Theorem~\ref{theo:full-IDF}, there is a remarkable
  point in Corollary~\ref{coro:graphNICA}: The maximum link state $K$
  has to be equal to or larger than $\ds$. This condition is necessary
  for the existence of $\ds$ linearly independent vectors
  $\bm{d}(w,\bs)$ in Assumption~(D3); If $\ds>K$, the linear independent
  $\ds$ vectors $\bm{d}(w(1),\bs),\dots,\bm{d}(w(\ds),\bs)$ never exist
  because of $w\in\{0,1,\dots,K\}$. Furthermore, this condition implies
  that the maximum link state $K$ may affect the performance for
  identifiability.
  % and it may be very difficult to identify the high-dimensional latent
  % vector $\bs$ for binary links (i.e., $K=1$). 
  In fact, Section~\ref{sec:exp} numerically supports this implication,
  and demonstrates that identifiability is clearly influenced by the
  maximum link state $K$.
  \subsection{Graph component analysis}
  \label{ssec:GCA}
  Here, we propose a practical method for estimating the inverse $\bg$,
  (Thus, the latent vector $\bs$ as well) from graph data. This method
  can be readily extended for a more general case in
  Theorem~\ref{theo:full-IDF}. The straightforward approach is to
  estimate the conditional distribution of $w$ given $\bx$ and $\bx'$.
  However, standard estimation methods such as maximum likelihood
  estimation require to compute the partition function
  $Z(\bx,\bx')$. Thus, when the maximum link state $K$ is very large
  (e.g., $K=\infty$), it is time consuming to compute it (e.g.,
  $Z(\bx,\bx')=\sum_{k=0}^{\infty}\pwxx(w=k|\bx,\bx')$).

  Alternatively, we take an approach of density ratio estimation which
  has been taken in existing methods for nonlinear
  ICA~\citep{sasaki2022representation} and graph
  embedding~\citep{satta2022graph}, and based on which general
  frameworks for estimation of unnormalized statistical models have been
  proposed~\citep{gutmann2011bregman,uehara2020unified}. In fact,
  Corollary~\ref{coro:graphNICA} holds even
  when~\eqref{full-identifiability} is replaced by the following
  distribution ratio:
  \begin{align}
   \frac{\pwx^{\bg}(w|\bx,\bx')}{\pw(w)}
   &=\frac{\pwx^{\tbg}(w|\bx,\bx')}{\pw(w)}
   ~~\Rightarrow~~\bg=\tbg.
  \end{align}
  Thus, one can estimate $\bg(\bx)$ through the distribution ratio
  estimation. This approach is close to contrastive
  learning~\citep{gutmann2012a} and appealing because it enables us to
  estimate $\bg$ to the infinite maximum link state (i.e., $K=\infty$)
  whose the partition function is difficult to compute in general. In
  addition, this approach can be extended for continuous $w$ without any
  efforts.
  
  Given a graph dataset,
  \begin{align*}
   \{w_{ij}\}_{1\leq{i}<j\leq{n}}~\text{and}~\{\bx_i\}_{i=1}^n,
  \end{align*}  
  where $w_{ij}(=w_{ji})$ is assumed to be symmetric, we estimate the
  ratio
  \begin{align}
   \log\frac{\pwx(w|\bx,\bx')}{\pw(w)}
   =\log\pwx(w|\bx,\bx')-\log\pw(w). \label{log-ratio}
  \end{align}
  To this end, we employ the following model for estimation of the
  ratio:
  \begin{align}
   r(w,\bx,\bx')&:=
   \sum_{i=1}^{\ds}[\psi^{(i)}(w,h^{(i)}(\bx),h^{(i)}(\bx'))]+b(w),%+b(\bh(\bx),\bh(\tbx)),
   \label{model-r}
  \end{align}
  where all of
  $\bh(\cdot)=(h^{(1)}(\cdot),\dots,h^{(\ds)}(\cdot))^{\top}$,
  $\psi^{(i)}(\cdot,\cdot,\cdot)$ and $b(\cdot)$ are models (e.g.,
  neural networks) and estimated from data. Compared
  with~\eqref{log-ratio}, the first summation on the right-hand side
  of~\eqref{model-r} models $\log\pwx(w|\bx,\bx')$ and the function form
  comes from Assumption~(D1) in Corollary~\ref{coro:graphNICA}. Thus,
  $\bh(\bx)$ corresponds to a model of the inverse $\bg$. The second
  term simply corresponds to $\log\pw(w)$.

  Based on the ratio approach, we adopt the Donsker-Varadhan variational
  estimation~\citep{ruderman2012tighter,belghazi2018mutual} because it
  has been experimentally demonstrated that it yields a sample-efficient
  method for nonlinear ICA~\citep{sasaki2022representation}. The
  objective function based on the Donsker-Varadhan variational
  estimation is defined as
  \begin{align}
   \Jdv(r)&:=-\sum_{w=0}^K\iint{r}(w,\bx,\bx')\pwxx(w,\bx,\bx')
   \intd\bx\intd\bx'\nonumber\\
   &+\log\left( \sum_{w=0}^K\iint e^{r(w,\bx,\bx')}
   \pw(w)\pxx(\bx,\bx')\intd\bx\intd\bx'\right). \label{dv-obj}
  \end{align}
  A simple calculation shows that $\Jdv(r)$ is minimized at the
  following distribution ratio:
  \begin{align*}
   r(w,\bx,\bx')=\log\frac{\pwxx(w,\bx,\bx')}{\pw(w)\pxx(\bx,\bx')}
   =\log\frac{\pwxx(w|\bx,\bx')}{\pw(w)}.
  \end{align*}
  
  In practice, we need to approximate $\Jdv(r)$ from the graph dataset,
  but may not be able to apply the standard low of large numbers based
  on the i.i.d. assumption because samples from $\pwxx(w,\bx,\bx')$ or
  $\pxx(\bx,\bx')$ can be non-i.i.d\footnote{Obviously, both
  $(\bx_i,\bx_j)$ and $(\bx_i,\bx_{j'})$ for $j\neq{j'}$ can be regarded
  as samples from $\pxx(\bx,\bx')$, but are clearly
  dependent.}. Fortunately, by applying the law of large numbers for
  doubly-indexed partially dependent random
  variables~\cite[Theorem~A.1]{okuno2019robust}, we can approximate
  $\Jdv(r)$ from the graph dataset as follows:
  \begin{align}
   \hJdv(r)&:=-\frac{1}{|\calI_n|}
   \sum_{(i,j)\in\calI_n}r(w_{ij},\bx_i,\bx_j)
   +\log\left(\frac{1}{|\calI_n|}\sum_{(i,j)\in\calI_n}
   e^{r(w^*_{ij},\bx_i,\bx_j)}\right), \label{emp-dv-obj}
  \end{align}
  where $w^*_{ij}$ is a random permutation of $w_{ij}$ with respect to
  $(i,j)$, $\calI_n:=\{(i,j)|1\leq{i}<{j}\leq{n}\}$ and $|\calI_n|$
  denotes the number of elements in $\calI_n$. Finally, we obtain an
  estimate of $\bg(\bx)$ as $\widehat{\bh}(\bx)$ by minimizing
  $\hJdv(r)$, and call this method as the \emph{graph component
  analysis} (GCA).
 \section{Illustration on artificial data}
 \label{sec:exp}
 This section numerically investigates how the latent dimension $\ds$
 and the maximum link state $K$ affect the performance of GCA as implied
 in Corollary~\ref{coro:graphNICA}. To this end, we generated i.i.d. $n$
 samples of latent vectors, $\{\bs_i\}_{i=1}^n$, from the following two
 distributions:
 \begin{itemize}
  \item \emph{Independent Laplace}: Latent vectors $\bs_i$ were sampled
	from the independent Laplace distribution as
	$\ps(\bm{s})\propto\prod_{i=1}^{\ds}\exp(-\sqrt{2}|s^{(i)}|)$.

  \item \emph{Correlated Gauss}: Latent vectors $\bs_i$ were sampled
	from a multivariate Gaussian distribution with mean $\bm{0}$ and
	covariance matrix $\bm{C}$ whose $(i,j)$-th elements $c^{(i,j)}$
	are given by
	\begin{align*}
	 c^{(i,j)}=\left\{
	 \begin{array}{ll}
	  1 & i=j\\
	  0.3 & j=i+1\\
	  0 & \text{otherwise}
	 \end{array}
	 \right.
	\end{align*}
	Thus, the elements in $\bs$ have clear correlation.
	% For all $i=1,\dots,d$, $$,
	% $c_{i,i+1}=c_{i+1,i}=0.3$ and the other $c_{i,j}$ are all zeros.
 \end{itemize}

 Given $\bs_i$ and $\bs_j~(i<j)$, we generated symmetric link weights
 $w_{ij}$ from the following the conditional distribution of $w$:
 \begin{align}  
  \pws(w=k|\bs,\bs')&=\frac{\exp(\sum_{i=1}^{\ds}
  \alpha^k_is^{(i)}s^{(i)\prime})}
  {\sum_{k'=1}^{K}\exp(\sum_{i=1}^{\ds}\alpha^{k'}_is^{(i)}s^{(i)\prime})},
  \label{exp-cond}
 \end{align}
 where with the independent uniform noise $\epsilon_{i}^k$ on $[0,1]$,
 $\alpha^k_i$ are randomly determined as
 \begin{align*}
  \alpha^k_i=\left\{
  \begin{array}{ll}
   1 + 0.1\epsilon_{i}^k& i={k}\\
   0.1\epsilon_{i}^k& i\neq{k}\\
  \end{array}
  \right.
 \end{align*}
 Based on the latent vectors $\bs_i$, data samples $\bx_i$ were
 generated as $\bx_i=\bm{f}(\bs_i)$ where $\bm{f}$ was modeled by a
 three-layer feedforward neural network: The number of units in each
 hidden layer was all the same as the data dimension $\dx$, and
 activation function was leaky ReLU with slope $0.2$.
 
 To estimate the inverse $\bg$, we modeled the feature vector $\bh(\bx)$
 by a five-layer neural network where the number of units in each hidden
 layer was $50$, and the activation function was ReLU, while the final
 layer had no activation function. Ratio model $r(w,\bx,\bx')$ was
 expressed as follows:
 \begin{align*}
  r(w,\bx,\bx')&=\sum_{i=1}^{\ds}
  \beta^{w}_ih^{(i)}(\bx)h^{(i)}(\bx')+b^{w},
 \end{align*}
 where $\beta^{w}_i$ and $b^w$ were parameters learned from graph data
 initialized as done for $\alpha^k_i$.  All parameters were optimized by
 Adam~\citep{kingma2015adam} with learning rate $10^{-4}$ where the
 minibatch size and total number of iterations were $100$ and $100,000$,
 respectively. For comparison, we learned the following energy-based
 model (EBM) appeared in~\citet{sasaki2018neural}
 and~\citet{khemakhem2020ice}:
 \begin{align*}
  \pxu^{\bh}(\bx|\bx')=\frac{1}{Z(\bh(\bx'))}
  \exp\left(\bh(\bx)^{\top}\bh(\bx')\right).
 \end{align*}
 Note that unlike GCA, EBM does not need link weight $w_{ij}$.
 $\bh(\bx)$ in EBM is modeled and learned in the same way as GCA.  More
 details are given in Appendix~\ref{app:EBM}.

 Since the conditional distribution~\eqref{exp-cond} in this experiment
 satisfies~\eqref{model-assump} in Proposition~\ref{prop:nonlin-func},
 the inverse function $\bg(\bx)$ should be linear to $\bs$. Thus, as
 done in~\citet{sasaki2022representation}, we measured the performance
 by the mean absolute correlation between the test latent vectors
 $\bs^{\mathrm{te}}$ and estimated feature vectors
 $\widehat{\bh}(\bx^{\mathrm{te}})$ on test data $\bx^{\mathrm{te}}$
 where the number of test samples were $10,000$. Thus, a larger
 correlation value means a better performance.

 We first present how the performance changes to the latent dimension
 $\ds$ in Fig.\ref{fig:meancorr_dim} when the data dimension and maximum
 link state are fixed at $\dx=6$ and $K=10$, respectively. GCA (red
 lines) shows very high correlation values when $\ds$ is less than or
 equal to $\dx=6$. However, as $\ds$ is more than $\dx=6$, the
 performance of GCA is quickly decreased. This would be because the
 mixing function $\bm{f}$ is not bijective in the case of $\ds>\dx$, and
 cannot be an embedding as assumed in
 Section~\ref{sec:generative-model}. On the other hand, EBM (blue lines)
 shows low correlations on the range of $\ds$. EBM assumes that pairs of
 $\bx_i$ and $\bx_j$ are statistically dependent, but
 $\bx_1,\dots,\bx_n$ are independent in this experiment because
 $\bs_1,\dots,\bs_n$ are i.i.d. This means that EBM might not be
 suitable for graph data.
 
 Next, the averages of the mean absolute correlations are presented in
 Fig.\ref{fig:meancorr} when the maximum link state $K$ is modified with
 $\dx=\ds=6$. Note that EBM does not use link weights, the maximum link
 weight is fixed at $K=10$ only for learning EBM.  When the maximum link
 state $K$ is less than the latent dimension $\ds=6$, the mean
 correlations of GCA (red lines) are clearly small. On the other hand,
 the performance is significantly improved as $K$ is equal to or gets
 larger than $\ds=6$. This result is fairly consistent, and supports the
 implication of Corollary~\ref{coro:graphNICA}. Again, since
 $\bx_1,\dots,\bx_n$ are i.i.d., EBM (blue lines) does not estimate the
 latent vector $\bs$.

 \begin{figure}[t]
  \centering \subfigure[Independent
  Laplace]{\includegraphics[width=0.45\textwidth]{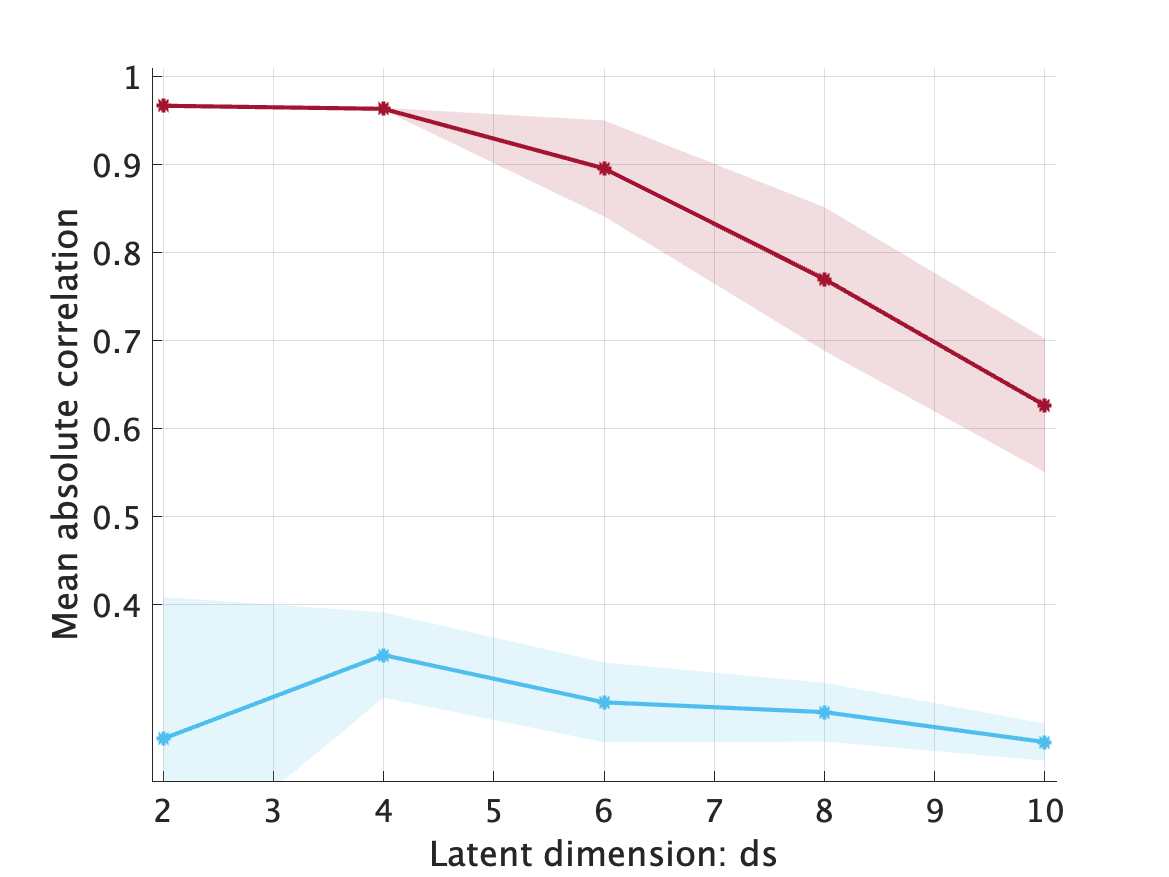}}
  \subfigure[Correlated
  Gauss]{\includegraphics[width=0.45\textwidth]{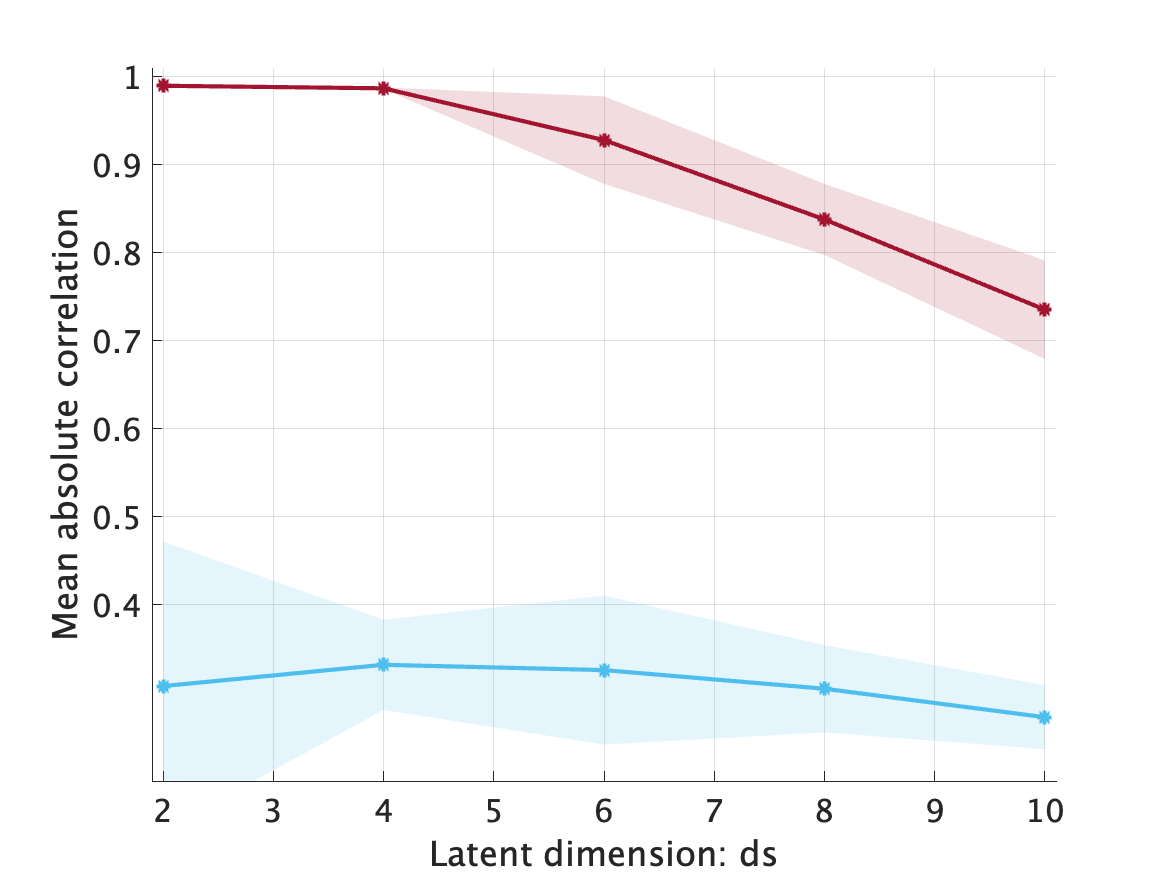}}
  \caption{\label{fig:meancorr_dim} Mean absolute correlation over the
  latent dimension $\ds$ when $(\dx, K, n)=(6, 10, 10000)$. The red and
  blue lines are for GCA and EBM, respectively. Each marker represents
  the average of the mean absolute correlations over $10$ runs, and the
  shaded region is the standard deviation.}
 \end{figure} 

 \begin{figure}[t]
  \centering \subfigure[Independent
  Laplace]{\includegraphics[width=0.45\textwidth]{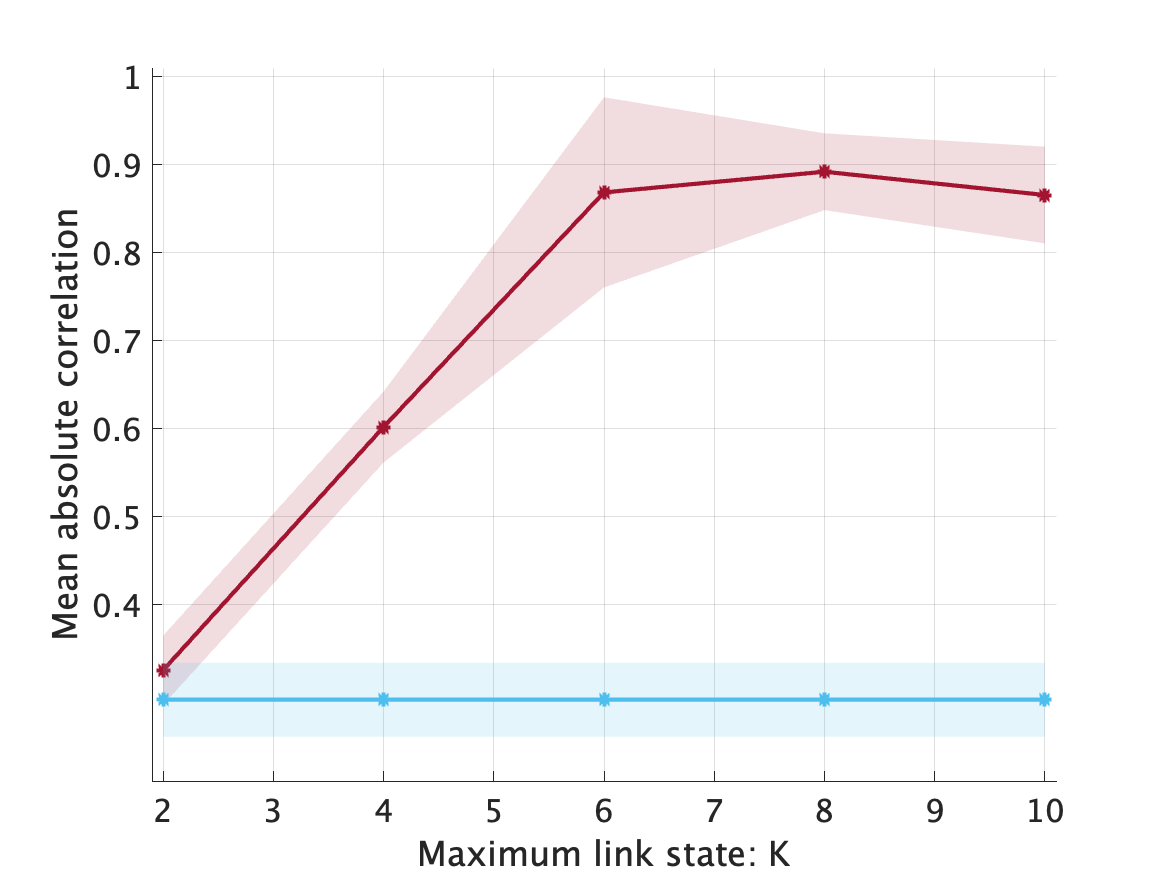}}
  \subfigure[Correlated
  Gauss]{\includegraphics[width=0.45\textwidth]{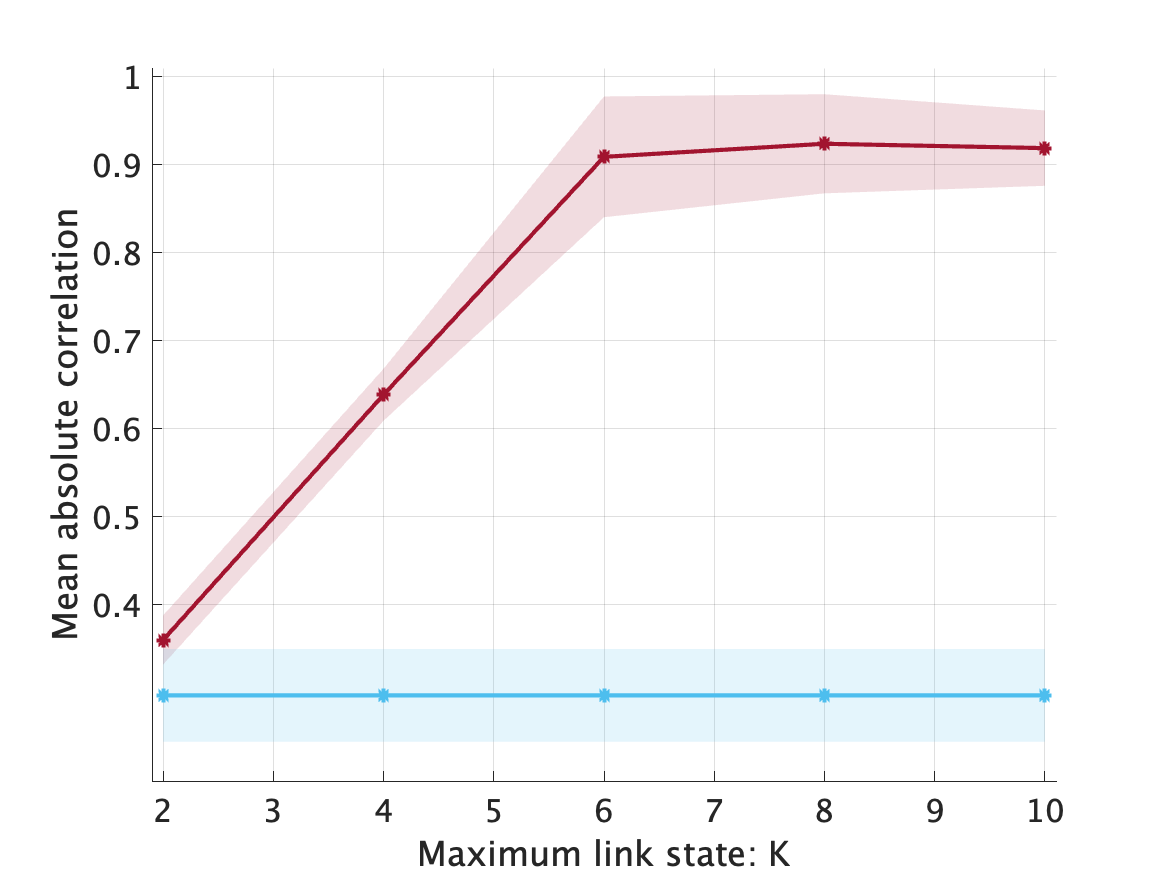}}
  \caption{\label{fig:meancorr} Mean absolute correlation over the
  number of link states when $\ds=\dx$ and $(\dx, n)=(6, 10000)$. The
  red and blue lines are for GCA and EBM, respectively.  For EBM, the
  blue line is flat because link weights are not required and the
  maximum link state is fixed at $K=10$ in learning EBM.}
 \end{figure} 

 \section{Conclusion}
 This paper proposed a statistical model of two latent vectors with
 single auxiliary data. Based on the proposed model, we established
 conditions for partial and full identifiability of the proposed model.
 Interestingly, the dimensionality relation among latent vectors and
 auxiliary data would be important conditions for identifiability. When
 interpreting our model from the reverse data generative process, our
 work has weak assumptions on the joint and marginal distribution of two
 latent vectors. In addition, we proved that the indeterminacy of our
 model has the same indeterminacy as linear ICA under certain
 conditions. The proposed statistical model and identifiability theory
 were applied to graph data, and then one of the identifiability
 conditions includes an interesting implication: The maximum link weight
 is an important factor for identifiability. Based on the application to
 graph data, we proposed a practical method for identifiable graph
 embedding called graph component analysis (GCA). Numerical experiments
 demonstrated that the performance of GCA clearly depends on the maximum
 link weight as implied in our theoretical results.

 \appendix 

 \section{Useful lemmas}
 Our proof is based on the following lemma proved
 in~\citet{sasaki2022representation}:
 \begin{Lemm}[Lemma~10 in~\citet{sasaki2022representation}]
  \label{lem:diagonals} Suppose that $\bm{D}(v)$ is an $n$ by $n$
  diagonal matrix whose diagonals $d_{i}(v), i=1\dots,n$ is a function
  of $v\in\R{}$, and $\bm{A}$ and $\bm{B}$ are $n$ by $n$ constant
  matrices. Furthermore, the following assumptions are made:
  \begin{enumerate}
   \item[(1)] There exist $n$ points $v_1, v_2,\dots,v_n$ such that
	      $\bm{d}(v_1),\bm{d}(v_2),\dots,\bm{d}(v_n)$ are linearly
	      independent where
	      $\bm{d}(v):=(d^{(1)}(v),d^{(2)}(v),\dots,d^{(n)}(v))^{\top}$
	      is the vector of the diagonal elements in $\bm{D}(v)$.
	       
   \item[(2)] $\bm{A}$ and $\bm{B}$ are of full-rank.
  \end{enumerate}
  Then, when $\bm{A}\bm{D}(v)\bm{B}$ is a diagonal matrix at least at
  $n$ points $v_1, v_2,\dots, v_n$, then both $\bm{A}$ and $\bm{B}$ are
  diagonal matrices multiplied by a permutation matrix.
 \end{Lemm}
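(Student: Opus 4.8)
The plan is to argue by looking at the off-diagonal entries of $\bm{A}\bm{D}(v)\bm{B}$ and converting the hypothesis into a system of linear constraints on the rows of $\bm{A}$ paired with the columns of $\bm{B}$. Write $\bm{A} = (a_{ij})$ and $\bm{B} = (b_{ij})$, and let $\bm{a}_i^{\top}$ denote the $i$-th row of $\bm{A}$ and $\bm{b}_j$ the $j$-th column of $\bm{B}$. The $(k,\ell)$ entry of $\bm{A}\bm{D}(v)\bm{B}$ is $\sum_{m=1}^{n} a_{km} d^{(m)}(v) b_{m\ell} = \bm{d}(v)^{\top} \bm{c}_{k\ell}$, where $\bm{c}_{k\ell} := (a_{k1}b_{1\ell},\dots,a_{kn}b_{n\ell})^{\top}$ is the Hadamard-type product of $\bm{a}_k$ and the $\ell$-th column of $\bm{B}$.

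First I would fix a pair $k\neq\ell$. By assumption the $(k,\ell)$ entry vanishes at the $n$ points $v_1,\dots,v_n$, so $\bm{d}(v_t)^{\top}\bm{c}_{k\ell} = 0$ for all $t=1,\dots,n$. Assumption~(1) says the vectors $\bm{d}(v_1),\dots,\bm{d}(v_n)$ are linearly independent, hence they span $\R{n}$, so the only vector orthogonal to all of them is the zero vector; therefore $\bm{c}_{k\ell} = \bm{0}$, i.e. $a_{km} b_{m\ell} = 0$ for every $m$. This is the crucial reduction: the functional hypothesis collapses to the purely combinatorial statement that for all $k\neq\ell$ and all $m$, $a_{km}=0$ or $b_{m\ell}=0$.

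Next I would turn this zero pattern into the claimed structure using the full-rank Assumption~(2). For each column index $m$, let $R_m := \{k : a_{km}\neq 0\}$ and $C_m := \{\ell : b_{m\ell}\neq 0\}$ be the support of the $m$-th column of $\bm{A}$ and the $m$-th row of $\bm{B}$. The constraint says that whenever $k\in R_m$ and $\ell\in C_m$ we must have $k=\ell$; hence $R_m$ and $C_m$ are each contained in a single common index, or one of them is empty. But $\bm{A}$ full-rank forbids a zero column, so $R_m\neq\emptyset$; similarly $\bm{B}$ full-rank forbids a zero row, so $C_m\neq\emptyset$. Therefore $R_m = C_m = \{\pi(m)\}$ for a single index $\pi(m)$, meaning the $m$-th column of $\bm{A}$ and the $m$-th row of $\bm{B}$ are both supported on the single entry $\pi(m)$. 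Finally I would check $\pi$ is a bijection: if $\pi(m)=\pi(m')$ for $m\neq m'$ then $\bm{A}$ has two columns supported on the same single row, forcing them to be linearly dependent and contradicting full rank. Hence $\pi$ is a permutation, $\bm{A} = \bm{P}_1 \bm{\Lambda}_1$ and $\bm{B} = \bm{\Lambda}_2 \bm{P}_2$ with diagonal $\bm{\Lambda}_i$ and permutation matrices $\bm{P}_i$; equivalently each is a permutation matrix times a diagonal, which is the assertion.

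The one place to be careful — and the main obstacle — is the step extracting $\bm{c}_{k\ell}=\bm{0}$ from vanishing at exactly $n$ points: it genuinely uses that $n$ linearly independent vectors in $\R{n}$ form a basis, so the ``at least $n$ points'' in the hypothesis is sharp and no analyticity or extra-point argument is needed. The rest is bookkeeping on support patterns, with the two full-rank conditions each used exactly once (once to rule out empty $R_m$ and once for empty $C_m$, and again jointly to force $\pi$ injective).
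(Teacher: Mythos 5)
Your proof is correct. Note that the paper itself does not prove this lemma --- it is imported verbatim as Lemma~10 of \citet{sasaki2022representation} --- so there is no in-paper argument to compare against; your reduction of the off-diagonal vanishing to the combinatorial condition $a_{km}b_{m\ell}=0$ for all $k\neq\ell$ (via the fact that the $n$ linearly independent vectors $\bm{d}(v_1),\dots,\bm{d}(v_n)$ span $\R{n}$), followed by the support bookkeeping that uses each full-rank hypothesis to rule out empty supports and to force injectivity of $\pi$, is the standard argument and is complete.
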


 We also rely on the following lemma:
 \begin{Lemm}
  \label{lem:jacobian} If $\bm{f}:\R{\ds}\to\R{\dx}$ with $\ds\leq{\dx}$
  is a smooth embedding, the Jacobian of its inverse $\bg:=\bm{f}^{-1}$
  has rank $\ds$.
 \end{Lemm}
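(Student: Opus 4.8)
The plan is to exploit the identity $\bg\circ\bm{f}=\mathrm{id}_{\R{\ds}}$ together with the fact that a smooth embedding is in particular a smooth immersion, so that $\bJ_{\bm{f}}$ has full column rank $\ds$ everywhere; the rank of $\bJ_{\bg}$ will then be forced to be $\ds$ by a one-line linear-algebra argument.

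First I would pin down what ``the Jacobian of $\bg$'' means, since $\bg$ is defined only on $\calM_{\bm{f}}$, which by Proposition~5.2 of~\citet{lee2012introduction} is an embedded $\ds$-dimensional submanifold of $\R{\dx}$. Fix $\bx_0=\bm{f}(\bs_0)\in\calM_{\bm{f}}$. Using the standard extension lemma for smooth functions on embedded submanifolds, I would produce a smooth map $\tbg$ on an open neighborhood $U\subset\R{\dx}$ of $\bx_0$ with $\tbg=\bg$ on $U\cap\calM_{\bm{f}}$, and set $\bJ_{\bg}(\bx_0):=\bJ_{\tbg}(\bx_0)\in\R{\ds\times\dx}$; this is exactly the object appearing in~\eqref{cond-xuw}, and its restriction to the tangent space $T_{\bx_0}\calM_{\bm{f}}$ (all that the later proofs use) does not depend on the chosen extension.

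Next I would differentiate $\tbg\circ\bm{f}=\bg\circ\bm{f}=\mathrm{id}_{\R{\ds}}$, valid on a neighborhood of $\bs_0$ because $\bm{f}(\bs)\in\calM_{\bm{f}}$. The chain rule yields $\bJ_{\bg}(\bx_0)\,\bJ_{\bm{f}}(\bs_0)=I_{\ds}$ with $\bJ_{\bm{f}}(\bs_0)\in\R{\dx\times\ds}$. Since $\bm{f}$ is an immersion, $\mathrm{rank}\,\bJ_{\bm{f}}(\bs_0)=\ds$. From $\bJ_{\bg}(\bx_0)\,\bJ_{\bm{f}}(\bs_0)=I_{\ds}$ we get $\ds=\mathrm{rank}\,I_{\ds}\le\mathrm{rank}\,\bJ_{\bg}(\bx_0)$, while $\bJ_{\bg}(\bx_0)$ has only $\ds$ rows, so $\mathrm{rank}\,\bJ_{\bg}(\bx_0)\le\ds$. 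Hence $\mathrm{rank}\,\bJ_{\bg}(\bx_0)=\ds$, and since $\bx_0$ was an arbitrary point of $\calM_{\bm{f}}$ the claim follows.

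The only real obstacle is the bookkeeping around the domain of $\bg$: one must ensure the notion of Jacobian used is consistent with~\eqref{cond-xuw} and the subsequent proofs, i.e.\ derivatives taken along a neighborhood extension of $\calM_{\bm{f}}$; once that is fixed, the rank computation is immediate. If one prefers to avoid extensions, the same conclusion is obtained by reading $\bJ_{\bg}$ as the coordinate matrix of the differential $d\bg_{\bx_0}\colon T_{\bx_0}\calM_{\bm{f}}\to\R{\ds}$, which is a linear isomorphism since $d\bg_{\bx_0}\circ d\bm{f}_{\bs_0}=\mathrm{id}$ and $d\bm{f}_{\bs_0}$ is injective with $\ds$-dimensional image.
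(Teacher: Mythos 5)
Your proof is correct and follows essentially the same route as the paper: differentiate $\bg\circ\bm{f}=\mathrm{id}_{\R{\ds}}$, use that the embedding $\bm{f}$ is an immersion so $\bJ_{\bm{f}}$ has rank $\ds$, and read off the rank of $\bJ_{\bg}$ from the resulting identity. The extra care you take in defining $\bJ_{\bg}$ via a smooth extension of $\bg$ off the submanifold $\calM_{\bm{f}}$ is a welcome tightening that the paper's one-line proof leaves implicit, but it does not change the argument.
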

 \begin{proof}
  By definition, $\bs=\bg\circ\bm{f}(\bs)$. By computing its gradient
  with respect to $\bs$,
  \begin{align*}
   \bm{I}_{\ds}=\bJ_{\bg}(\bm{f}(\bs))^{\top}\bJ_{\bm{f}}(\bs).
  \end{align*}
  Since $\bm{f}$ is a smooth embedding and $\bJ_{\bm{f}}(\bs)$ has rank
  $\ds$, the rank of $\bJ_{\bg}$ has to be $\ds$.
 \end{proof}
 \section{Proof of Theorem~\ref{theo:partial-IDF}}
 \label{app:proof-partial-IDF}
 \begin{proof}
  We first recall that the conditional distribution of $\bx$ and $\bu$
  given $\bw$ under $\bg$ and $\bphi$ is given by
  \begin{align}
   \log\pxuw^{\bg,\bphi}(\bx,\bu|\bw)
   &=\sum_{i=1}^{\ds}q^{(i)}(g^{(i)}(\bx),\bphi(\bu),\bw)-\log{Z}(\bw)
   \nonumber\\&\qquad+\frac{1}{2}\log\det(\bG_{\bg}(\bx))
   +\frac{1}{2}\log\det(\bG_{\bphi}(\bu)).  \label{cond-inverty}
  \end{align}
  With a single point $\bar{\bw}$ of $\bw$ in Assumption~(A3), we have
  \begin{align*}
   \log\pxuw^{\bg,\bphi}(\bx,\bu|\bw)
   -\log\pxuw^{\bg,\bphi}(\bx,\bu|\bar{\bw})
   =\sum_{i=1}^{\ds}{\bq}^{(i)}(g^{(i)}(\bx),\bphi(\bu),\bw)
   -\log\frac{Z(\bw)}{Z(\bar{\bw})},
  \end{align*}
  where
  ${\bq}^{(i)}(s,\bv,\bw):=q^{(i)}(s,\bv,\bw)-q^{(i)}(s,\bv,\bar{\bw})$.
  By assuming that the same conditional distribution
  as~\eqref{cond-inverty} exists under $\tbg$ and $\tbphi$, we have
  $\pxuw^{\bg,\bphi}(\bx,\bu|\bw)=\pxuw^{\tbg,\tbphi}(\bx,\bu|\bw)$
  implying
  \begin{align}
   \sum_{i=1}^{\ds}{\bq}^{(i)}(g^{(i)}(\bx),\bphi(\bu),\bw)
   =\sum_{i=1}^{\ds}{\bq}^{(i)}(\tg^{(i)}(\bx),\tbphi(\bu),\bw).
   \label{ide-cond-tmp}
  \end{align}
  
  Let us express $\tbs:=\tbg(\bx)$ and $\tbv:=\tbphi(\bu)$, and recall
  $\bs=\bg(\bx)=\bg\circ\tilde{\bm{f}}(\tbs)$ and
  $\bv=\bphi(\bu)=\bphi\circ\tbvphi(\tbv)$ where
  $\tilde{\bm{f}}:=\tbg^{-1}$ and $\tbvphi:=\tbphi^{-1}$.  Then, we
  express~\eqref{ide-cond-tmp} as
  \begin{align}
   \sum_{i=1}^{\ds}{\bq}^{(i)}(s^{(i)},\bv,\bw)
   =\sum_{i=1}^{\ds}{\bq}^{(i)}(\ts^{(i)},\tbv,\bw),
   \label{ide-cond}
  \end{align}
  and then compute the cross-derivative of both sides
  on~\eqref{ide-cond} with respect to $\ts^{(l)}$ and $\ts^{(m)}$ for
  $l\neq{m}$ as follows:
  \begin{align}
   \sum_{i=1}^d\left\{\frac{\partial^2s^{(i)}}
   {\partial{\ts^{(l)}}\partial{\ts^{(m)}}}\bq^{(i,1)}(s^{(i)},\bv,\bw)
   +\parder{s^{(i)}}{\ts^{(l)}}\parder{s^{(i)}}{\ts^{(m)}}
   \bq^{(i,2)}(s^{(i)},\bv,\bw)\right\}=0, \label{ide-cond-cross-der}
  \end{align}
  where $\bq^{(i,1)}(s,\bv,\bw):=
  \frac{\partial}{\partial{s}}\bq^{(i)}(s,\bv,\bw)$ and
  $\bq^{(i,2)}(s,\bv,\bw):=
  \frac{\partial^2}{\partial{s^2}}\bq^{(i)}(s,\bv,\bw)$.  We further
  compactly express~\eqref{ide-cond-cross-der} as the following vector
  form:
  \begin{align}
   \bm{a}_{l,m}(\tbs)^{\top}\bbq^{\prime}(\bs,\bv,\bw)=0,
   \label{cross-der-vector}
  \end{align}  
  where
  \begin{align*}
   \hspace{-3mm}\bm{a}_{l,m}(\tbs)&:=\left(
   \frac{\partial^2s^{(1)}}{\partial{\ts^{(l)}}\partial{\ts^{(m)}}},\dots,
   \frac{\partial^2s^{(\ds)}}{\partial{\ts^{(l)}}\partial{\ts^{(m)}}},
   \parder{s^{(1)}}{\ts^{(l)}}\parder{s^{(1)}}{\ts^{(m)}},\dots,
   \parder{s^{(\ds)}}{\ts^{(l)}}\parder{s^{(\ds)}}{\ts^{(m)}}
   \right)^{\top}\hspace{-1mm}\in\R{2\ds}\\ 
   \hspace{-3mm}\bbq^{\prime}(\bs,\bv,\bw)&:=
   \left(\bq^{(1,1)}(s^{(1)},\bv,\bw),\dots,
   \bq^{(\ds,1)}(s^{(\ds)},\bv,\bw),\right.\\
   &\left.\qquad\qquad\qquad\qquad
   \bq^{(1,2)}(s^{(1)},\bv,\bw),
   \dots,\bq^{(\ds,2)}(s^{(\ds)},\bv,\bw) \right)^{\top}\in\R{2\ds}.
  \end{align*}
  Next, we define the following two vectors where the $i$-th elements in $\bv$
  and $\bw$ are fixed:
  \begin{align*}
   \bv^{(i)}&=(v^{(1)},\dots,v^{(i)}=\bar{v}^{(i)}\dots,v^{(\dv)})^\top\\
   \bw^{(i)}&=(w^{(1)},\dots,w^{(i)}=\bar{w}^{(i)}\dots,w^{(\dw)})^\top
  \end{align*}
  These two vectors enable us to express~\eqref{cross-der-vector} as
  \begin{align*}
   \bm{a}_{l,m}(\tbs)^{\top}\bar{\bm{Q}}^{\prime}(\bs)=\bm{0},
  \end{align*}
  where 
  \begin{align*}
   \bar{\bm{Q}}^{\prime}(\bs)&:=
   \left[\bbq^{\prime}(\bs,\bv^{(1)},\bw),\dots,
   \bbq^{\prime}(\bs,\bv^{(\dv)},\bw),
   \right.\\&\left.\qquad\qquad
   \bbq^{\prime}(\bs,\bv,\bw^{(1)}),\dots,
   \bbq^{\prime}(\bs,\bv,\bw^{(\dw)})\right]\biggr|_{\bv=\bv',\bw=\bw'}
   \in\R{2\ds\times{(\dv+\dw)}}.
  \end{align*}
  By Assumptions~(A2-3), the right inverse of
  $\bar{\bm{Q}}^{\prime}(\bs)$ exists and thus
  $\bm{a}_{l,m}(\tbs)=\bm{0}$, i.e.,
  \begin{align}
   \parder{s^{(i)}}{\ts^{(l)}}\parder{s^{(i)}}{\ts^{(m)}}=0,
   \label{parder-zero}
  \end{align}  
  for all $i$ and $(l,m)$ for $l\neq{m}$. 

  Finally, we complete the proof by showing the Jacobian of $\bs$ with
  respect to $\tbs$ is of full rank. With
  $\bs=\bg\circ\tilde{\bm{f}}(\tbs)$,
  \begin{align*}
   \bJ_{\bs}(\tbs)=\bJ_{\bg}(\tilde{\bm{f}}(\tbs))^{\top}
   \bJ_{\tilde{\bm{f}}}(\tbs).
  \end{align*}
  Since $\tilde{\bm{f}}$ is a smooth embedding, $\bJ_{\tilde{\bm{f}}}$
  has rank $\ds$, and Lemma~\ref{lem:jacobian} ensures that $\bJ_{\bg}$
  also has rank $\ds$. Thus, $\bJ_{\bs}(\tbs)$ has $\ds$ rank and is of
  full rank. Thus, by~\eqref{parder-zero}, each row of $\bJ_{\bs}(\tbs)$
  has a single nonzero element. This means that each $s_i$ is a function
  of a single variable in $\tbs$, which completes the proof.
 \end{proof}

 \section{Proof of Theorem~\ref{theo:partial-IDF-mild}}
 \label{app:partial-IDF-mild}
 \begin{proof}
  We begin by differentiating~\eqref{ide-cond} with respect to $s^{(l)}$
  and $v^{m}$ as
  \begin{align}
   &\bqw^{(l)}(\bw)\frac{\partial^2}{\partial s^{(l)}\partial{v}^{(m)}}
   \qsv^{(l)}(s^{(l)},\bv)\nonumber\\
   &\quad=\sum_{i=1}^{\ds}\sum_{j=1}^{\dv}
   \bqw^{(i)}(\bw)\parder{\ts^{(i)}}{s^{(l)}}\parder{\tv^{(j)}}{v^{(m)}}
   \frac{\partial^2}{\partial\ts^{(i)}\partial{\tv}^{(j)}}
   \qsv^{(i)}(\ts^{(i)},\tbv), \label{parder-sl-vm}
  \end{align}
  where Assumption~(B1) was applied. Eq.\eqref{parder-sl-vm} can be
  expressed as the following matrix form:
  \begin{align}
   \bm{D}(\bw)\bm{Q}_{\mathrm{sv}}(\bs,\bv)
   =\bJ_{\tbs}(\bs)^{\top}\bm{D}(\bw)\bm{Q}_{\mathrm{sv}}(\tbs,\tbv)
   \bJ_{\tbv}(\bv), \label{parder-sl-vm-matrix}
  \end{align}
  where $\bm{D}(\bw):=\diag(\bqw^{(\ds)}(\bw),\dots,\bqw^{(\ds)}(\bw))$
  and $\bm{Q}_{\mathrm{sv}}(\bs,\bv)$ is a $\ds$ by $\dv$ matrix with
  the $(l,m)$-th element
  $\frac{\partial^2}{\partial{s}^{(l)}\partial{v}^{(m)}}\qsv^{(l)}(s^{(l)},\bv)$.
  By Assumptions~(B2-3), $\bm{Q}_{\mathrm{sv}}(\bs,\bv=\bv')$ has the
  right inverse. Thus, by denoting the right inverse by
  $\bm{Q}_{\mathrm{sv}}^{\dagger}(\bs)$, we have
  from~\eqref{parder-sl-vm-matrix},
  \begin{align}
   \bm{D}(\bw)
   =\bJ_{\tbs}(\bs)^{\top}\bm{D}(\bw)\underbrace{\bm{Q}_{\mathrm{sv}}(\tbs,\tbv')
   \bJ_{\tbv}(\bv')\bm{Q}_{\mathrm{sv}}^{\dagger}(\bs)}_{\bm{M}(\bs)},
   \label{parder-sl-vm-matrix2}
  \end{align}
  where $\tbv':=\tbphi\circ\bvphi(\bv')$. Assumption~(B4) ensures that
  $\bm{D}(\bw)$ is full rank for $\bw\in\{\bw(1),\dots,\bw(\ds)\}$. In
  addition, $\bJ_{\tbs}(\bs)$ has also full rank because $\bm{f}$ is a
  smooth embedding. Thus, $\bm{M}(\bs)\in\R{\ds\times{\ds}}$ must be of
  full rank. By Assumption~(B4), the vectors of the diagonals in
  $\bm{D}(\bw)$ for $\bw\in\{\bw(1),\dots,\bw(\ds)\}$ are linearly
  independent. Therefore, applying Lemma~\ref{lem:diagonals}
  to~\eqref{parder-sl-vm-matrix2} indicates that $\bJ_{\tbs}(\bs)$ is
  the product of a diagonal and permutation matrix. This completes the
  proof.
 \end{proof}
  
 \section{Proof of Theorem~\ref{theo:full-IDF}}
 \label{app:full-IDF}
 \begin{proof}
  We first compute the partial derivative of~\eqref{ide-cond} with
  respect to $s^{(l)}$ and $v^{(m)}$ under Assumption~(C1) as
  \begin{align}
   \frac{\partial^2}{\partial{s}^{(l)}\partial{v^{(m)}}}
   \bq^{(l)}(s^{(l)},v^{(m)},\bw)
   =\sum_{i=1}^d\parder{\ts^{(i)}}{s^{(l)}}
   \parder{\tv^{(i)}}{v^{(m)}}
   \frac{\partial^2}{\partial{\ts^{(i)}}\partial{\tv^{(i)}}}
   \bq^{(i)}(\ts^{(i)},\tv^{(i)},\bw).  \label{cross-der}
  \end{align}
  We further compactly express~\eqref{cross-der} in a matrix form as
  \begin{align}
   \bD(\bs,\bv,\bw)
   =(\bm{J}_{\tbs}(\bs))^{\top}\bD(\bw,\tbs,\tbv)
   \bm{J}_{\tbv}(\bv),
   \label{matrix-univ3}
  \end{align}
  where $\bD(\bs,\bv,\bw)$ is a diagonal matrix with the $i$-th diagonal
  $\frac{\partial^2}{\partial{s}^{(i)}\partial{v^{(i)}}}\bq^{(i)}(s^{(i)},v^{(i)},\bw)$.
  
  As in previous proofs, our goal is to show that $\bm{J}_{\tbs}(\bs)$
  is the product of a permutation and diagonal matrix. To this end, we
  substitute $\bv'$ into $\bv$ in~\eqref{matrix-univ3} and have
  \begin{align}
   \bD_{\mathrm{v'}}(\bs,\bw)
   &=(\bm{J}_{\tbs}(\bs))^{\top}\bD_{\mathrm{\tv'}}(\tbs,\bw)\bm{J}'_{\tbv},
  \label{two-fixed-tmp}
  \end{align}
  where $\bD_{\mathrm{v'}}(\bs,\bw):=\bD(\bs,\bv',\bw)$,
  $\bD_{\mathrm{\tv'}}(\bs,\bw):=\bD(\bs,\tbv=\tbphi\circ\bvphi(\bv'),\bw)$
  and $\bm{J}'_{\tbv}:=\bm{J}_{\tbv}(\bv')$. Since $\bvphi$ is a smooth
  embedding, $\bm{J}'_{\tbv}$ is of full rank. Furthermore,
  Assumption~(C2) ensures that $\bD_{\mathrm{v'}}(\bs,\bw)$ is of
  full-rank for $\bw\in\{\bw(1),\dots,\bw(\ds)\}$ and all $\bs$.  Thus,
  for each $\bs$ as well as $\tbs=\tbg\circ{\bm{f}}(\bs)$, applying
  Lemma~\ref{lem:diagonals} to the right-hand side
  on~\eqref{two-fixed-tmp} under Assumption~(C2) indicates that both
  $\bm{J}_{\tbs}(\bs)$ and $\bm{J}'_{\tbv}$ are the product of diagonal
  and permutation matrix. The proof is completed.
 \end{proof}
 \section{Proof of Proposition~\ref{prop:nonlin-func}}
 \label{app:nonlin-func}
 \begin{proof}
  Here, some notations are inherited from
  Appendix~\ref{app:full-IDF}. From~\eqref{model-assump}, we have
  \begin{align*}
   \frac{\partial^2}{\partial{s^{(i)}}
   \partial{v^{(i)}}}\bq^{(i)}(\bw,s_i,v_i)
   &=\alpha^{(i)}(\bw,v^{(i)})\\
   \frac{\partial^2}{\partial{\ts^{(i)}}\partial{\tv^{(i)}}}\bq^{(i)}(\bw,\ts_i,\tv^{(i)})
   &=\alpha^{(i)}(\bw,\tv^{(i)}).
   %\quad\text{and}\quad.
  \end{align*}
  Thus, once $\bw$ is fixed at $\bw=\bw'$, both
  $\bD_{\mathrm{v'}}(\bs,\bw=\bw')$ and
  $\bD_{\mathrm{\tv'}}(\tbs,\bw=\bw')$ in~\eqref{matrix-univ3} are
  constant diagonal matrices whose diagonals are given by
  $(\alpha^{(1)}(\bw,v^{(1)}),\dots,\alpha^{(d)}(\bw,v^{(d)}))|_{\bw=\bw',\bv=\bv'}$
  and
  $(\alpha^{(1)}(\bw,\tv^{(1)}),\dots,\alpha^{(d)}(\bw,\tv^{(d)}))|_{\bw=\bw',\tbv=\tbphi\circ\bvphi(\bv')}$,
  respectively. Thus, we re-express these matrices simply by
  $\bD_{\mathrm{w'v'}}$ and $\bD_{\mathrm{w'\tv'}}$. Then, it follows
  from~\eqref{two-fixed-tmp} that $\bm{J}_{\tbs}(\bs)$ can be expressed
  as
  \begin{align}
   \bm{J}_{\tbs}(\bs)
   =(\bD_{\mathrm{w'\tv'}}\bm{J}'_{\tbv})^{-\top}\bD_{\mathrm{w'v'}}.
   \label{jacobian-v}
  \end{align}
  As proved in Appendix~\ref{app:full-IDF}, $\bm{J}'_{\tbv}$ is the
  product of a diagonal and permutation matrix. Thus,~\eqref{jacobian-v}
  indicates that there exist a nonzero constant $\gamma^{(i)}$ and
  permutation index $\pi(i)\in\{1,\dots,\ds\}$ such that
  \begin{align*}
   \parder{\ts^{(i)}}{s^{(\pi(j))}}
   %=\parder{\tbg\circ{\bm{f}}(\bm{s})}{s^{\pi(i)}}
   =\left\{
   \begin{array}{cc}
   \gamma^{(i)} & i=j \\
    0 & i\neq{j}
   \end{array}  
   \right.
  \end{align*}
  This completes the proof.
 \end{proof}
 \section{Reverse generative model}
 \label{app:reverse}
 Here, we give details of why identifiability is guaranteed in the
 reverse generative model as well. The proofs for all of theorems come
 with the partial derivatives of~\eqref{ide-cond} or its variants. Thus,
 it suffices to show~\eqref{ide-cond} holds in the reverse generative
 model. As an alternative to $\psvw(\bs,\bv|\bw)$ in
 Theorem~\ref{theo:partial-IDF}, we assume that the conditional
 distribution of $\bw$ given $\bs$ and $\bv$ is given in the reverse
 generative model by
 \begin{align*}
  \log\pwsv(\bw|\bs,\bv)=\sum_{i=1}^{\ds}q^{(i)}(s^{(i)},\bv,\bw)
  -\log{Z}(\bs,\bv).
 \end{align*}
 Then, by $\bs=\bg(\bx)$ and $\bv=\bphi(\bu)$, we obtain the conditional
 distribution of $\bw$ given $\bx$ and $\bu$ as 
 \begin{align*}
  \log\pwxu^{\bg,\bphi}(\bw|\bx,\bu)
  =\sum_{i=1}^{\ds}q^{(i)}(g^{(i)}(\bx),\bphi(\bu),\bw)
  -\log{Z}(\bx,\bu).
 \end{align*}
 With a single point $\bar{\bw}$ of $\bw$ in Assumption~(A3), we have
 \begin{align*}
  \log\pwxu^{\bg,\bphi}(\bw|\bx,\bu)-\log\pxuw^{\bg,\bphi}(\bar{\bw}|\bx,\bu)
  =\sum_{i=1}^{\ds}{\bq}^{(i)}(g^{(i)}(\bx),\bphi(\bu),\bw).
 \end{align*}
 Finally,
 $\pxuw^{\bg,\bphi}(\bx,\bu|\bw)=\pxuw^{\tbg,\tbphi}(\bx,\bu|\bw)$
 enables us to obtain~\eqref{ide-cond}. By exactly following the steps
 after~\eqref{ide-cond}, we reach the same conclusion as Theorem~1 if
 the other assumptions are retained.  It can be shown in the reverse
 generative model that the conclusions of
 Theorems~\ref{theo:partial-IDF-mild} and~\ref{theo:full-IDF} hold in
 the same way above.
 \section{Learning energy-based models}
 \label{app:EBM}
 This appendix gives details for learning energy-based models in
 Section~\ref{sec:exp}. Here, we take a similar approach as GCA and
 estimate the following density ratio:
 \begin{align*}
  \frac{\pxcx(\bx|\bx')}{\px(\bx)}.
 \end{align*}
 To estimate it, the objective function based on the Donsker-Varadhan
 variational estimation is used as
 \begin{align}
  \Jdv(r)&:=-\iint{r}(\bx,\bx')\pxx(\bx,\bx')
  \intd\bx\intd\bx'\nonumber\\
  &\qquad\qquad+\log\left(\iint e^{r(\bx,\bx')}
  \px(\bx)\px(\bx')\intd\bx\intd\bx'\right). 
 \end{align}
 A simple calculation shows that $\Jdv(r)$ is minimized at the following
 distribution ratio:
 \begin{align*}
  r(\bx,\bx')=\log\pxcx(\bx|\bx')-\log\px(\bx)
  %=\log\pxx(\bx,\bx')-\log\px(\bx)-\log\px(\bx').
 \end{align*}
 Thus, $r(\bx,\bx')$ is modeled as 
 \begin{align*}
  r(\bx,\bx')=\bh(\bx)^{\top}\bh(\bx')+a(\bh(\bx)),
  %=\bh(\bx)^{\top}\bh(\bx')+a(\bh(\bx))+b(\bh(\bx')),
 \end{align*}
 where $a(\cdot)$ is modeled as a single layer neural network without
 activation function. Then, we empirically approximate $\Jdv(r)$ from
 data samples as follows:
 \begin{align}
  \hJdv(r)&:=-\frac{1}{|\calI_n|}
  \sum_{(i,j)\in\calI_n}r(\bx_i,\bx_j)+\log\left(\frac{1}{|\calI_n|}\sum_{(i,j)\in\calI_n}
  e^{r(\bx_i,\bx^*_j)}\right), \label{emp-dv-obj}
 \end{align}
  where $\bx^*_{j}$ is a random permutation of $\bx^*_{j}$ with respect
  to $j$. Learning EBM is completed by minimizing $\hJdv(r)$.

\bibliographystyle{apalike}      
\bibliography{../../../papers}

\end{document}